\let\NAT@parse\undefined
\tikzset{>=latex}
\let\emptyset\varnothing
\theoremstyle{definition} 
\newtheorem{x}{X}
\newtheorem{definition}[x]{Definition} 
\newtheorem{construction}[x]{Construction} 
\newtheorem{lemma}[x]{Lemma} 
\newtheorem{theorem}[x]{Theorem}
\newcommand{\True}{{True}\xspace} 
\newcommand{\False}{{False}\xspace}
\newcommand{\PositiveNaturals}{\mathbb{N}^{+}}
\newcommand{\Language}[2]{\ensuremath{\mathcal{L}^{#1}(#2)}}
\providecommand{\ILanguage}[1]{\Language{I}{#1}}
\providecommand{\ALanguage}[1]{\Language{A}{#1}}
\providecommand{\GLanguage}[1]{\Language{G}{#1}}
\newcommand{\Vgoal}{V_{\mathrm{goal}}}
\providecommand{\Sa}{\ensuremath{\rm{a}}\xspace}
\providecommand{\Sb}{\ensuremath{\rm{b}}\xspace}
\providecommand{\Sc}{\ensuremath{\rm{c}}\xspace}
\providecommand{\Sx}{\ensuremath{\rm{x}}}
\newcommand*{\probleminternal}[4]{
	\par
	\medskip
	\noindent\fbox{\parbox{0.98\columnwidth}{
		\textbf{#4:} {#1} \\[0.05in]
		\renewcommand{\tabcolsep}{2pt}
		\begin{tabularx}{\linewidth}{rX}
			\emph{Input:} & #2 \\
			\emph{Output:} & #3
		\end{tabularx}
	}}
	\par
	\medskip
	\par
}
\let\emptyset\varnothing
\newcommand*{\ourproblem}[3]{\probleminternal{#1}{#2}{#3}{Problem}}
\newcommand*{\decproblem}[3]{\probleminternal{#1}{#2}{#3}{Decision Problem}}
\providecommand{\defemp}[1]{\emph{#1}} 
\newcommand{\pfmdec}{{\sc pfm-dec}\xspace}
\newcommand{\pfm}{{\sc pfm}\xspace}
\newcommand{\pfdm}{{\sc pfdm}\xspace}
\newcommand{\pfdmdec}{{\sc pfdm-dec}\xspace}
\newcommand{\nop}{{\sc nop}\xspace}
\newcommand{\placeholder}{\ensuremath{\ominus}\xspace}
\providecommand{\reachedv}[2]{\ensuremath{\mathcal{V}_{#1}(#2)}}
\providecommand{\reachedc}[2]{\ensuremath{\mathcal{C}(#1, #2)}}
\providecommand{\fprod}{\ensuremath{\odot}}
\providecommand{\oss}{{output simulates}\xspace}
\providecommand{\structure}[1]{\mathscr{#1}\xspace}
\providecommand{\ptext}{{\footnotesize\textsf{P}}}
\providecommand{\nptext}{{\footnotesize\textsf{NP}}}
\providecommand{\pspacetext}{{\footnotesize\textsf{PSPACE}}}
\providecommand{\p}{\ptext\xspace}
\providecommand{\pspace}{\pspacetext\xspace}
\providecommand{\nphard}{{\nptext -hard}\xspace}
\providecommand{\npcomplete}{{\nptext -complete}\xspace}
\providecommand{\pspacehard}{{\pspacetext -hard}\xspace}
\providecommand{\pspacecomplete}{{\pspacetext -complete}\xspace}
\providecommand{\tom}{\ensuremath{\xrightarrow[]{\text{min}}}}
\providecommand{\tomq}{\ensuremath{\xrightarrow[]{\text{`min'}}}}
\providecommand{\fig}{Fig.}
\providecommand{\figs}{Figs.}
\newcommand\shortenXor[2]{\ifdefined\arxiv #1\else #2\fi}
\newcommand\shorten[1]{\shortenXor{#1}{}}
\newcommand\shortvspace[1]{\ifdefined\arxiv\else \vspace*{#1}\fi}
\newcounter{tecounter}
\DeclareFontFamily{U}{mathb}{\hyphenchar\font45}
\DeclareFontShape{U}{mathb}{m}{n}{<5> <6> <7> <8> <9> <10> gen * mathb
<10.95> mathb10 <12> <14.4> <17.28> <20.74> <24.88> mathb12}{}
\DeclareSymbolFont{mathb}{U}{mathb}{m}{n}
\DeclareMathSymbol{\rcirclearrow}{0}{mathb}{'367}
\title{\LARGE \bf On nondeterminism in combinatorial filters
\shortvspace{-8pt}}
\author{Yulin Zhang and Dylan A. Shell%
\thanks{Yulin Zhang is with the Dept. of Computer Science,
The University of Texas, Austin, TX, USA. 
{\tt\small yulin@cs.utexas.edu}.
Dylan A. Shell is with Dept. of Computer Science \& Engineering, 
Texas A\&M University, College Station, TX, USA
{\tt\small dshell@tamu.edu}.
}
\thanks{This work was supported by the National Science Foundation through awards
\href{http://nsf.gov/awardsearch/showAward?AWD_ID=1453652}{IIS-1453652},
\href{https://nsf.gov/awardsearch/showAward?AWD_ID=1849249}{IIS-1849249},
and
\href{https://nsf.gov/awardsearch/showAward?AWD_ID=2034097}{IIS-2034097}.
}}
\begin{document}

\maketitle
\begin{abstract}
The problem of combinatorial filter reduction arises from \shorten{questions of }resource
optimization in robots; it is one specific way in which automation can help to
achieve minimalism, to build better\shorten{, simpler} robots.  This paper contributes a
new definition of filter minimization that is broader than its antecedents, 
allowing filters (input, output, or both) to be nondeterministic.  
This changes the problem considerably. 
Nondeterministic filters \shortenXor{are able to}{may} re-use states to obtain\shorten{, essentially,}
more `behavior' per vertex. We show that the gap in size can be significant
(larger than polynomial), 
suggesting such cases will generally be more challenging than 
deterministic problems. Indeed, this is supported by the core
\shorten{computational }complexity result established in this paper: 
producing nondeterministic minimizers is \pspacehard.
\shortenXor{
The hardness separation for minimization which exists between
deterministic filter and 
deterministic automata, thus, does not 
hold for the nondeterministic case.}{The hardness separation for minimization existing between
deterministic filter and 
automata, thus, fails
to hold for the nondeterministic case.}

%



%
%
%
%
%
%

\end{abstract}

\section{Introduction}

\shortenXor{As robots become ever more complex, it is natural to turn to computational tools to help automate their design and
fabrication processes.  }{With increasingly complex robots,
one naturally turns to computational tools to help automate design processes.}  This leads directly to the practical question of how to
reduce a robot's resource footprint.  Minimizing resources causes one to reason
about their necessity, which furnishes more fundamental insights about the
underlying information requirements of particular robot tasks~\cite{donald95}.
This paper focuses on \shorten{the problem of }minimizing state in \emph{combinatorial
filters}~\cite{lavalle10sensing}, discrete variants of the probabilistic
estimators so widely used in robotics~\cite{thrun02probablistic}.  While their
minimization problem is easy to formulate (to wit: reduce the number of
states while preserving input--output behavior), it is computationally hard
to solve.

Combinatorial filter reduction was first introduced as an open question by
Tovar et al.~\cite[pg.~12]{tovar2014combinatorial}.  They introduced the
scenario in \fig~\ref{fig:diag_donut} to exemplify the problem: two agents
wander in a circular world, and three sensor beams (producing symbols `\Sa',
`\Sb', and `\Sc', \shortenXor{respectively}{resp.}) partition the environment into sector-shaped
regions (labeled $0$, $1$, $2$).  The beams detect if an agent crosses the
dividing line but senses neither the agent's identity nor direction of
motion.  With the agents starting in some known configuration, the task is,
given a sequence of sensor readings (i.e., a string of \Sa's, \Sb's, \Sc's), to
determine whether the pair are in the same sector or not.  This
problem may be solved via a filter, a finite transition system akin to a Moore
machine transducer whose vertices bear an output (or color).  Starting at the
initial state, one traces the input string forward to produce a sequence of
colors that represent estimates.  When every string gives a solitary tracing,
the filter is \emph{deterministic}.  One wonders: what is the smallest filter
for tracking the co-location of our two agents?  Only 4 states are required.
\shortenXor{The answer appears in}{See} \fig~\ref{fig:reduced_donut}; the minimal filter is
deterministic.  Other than human nous (how Tovar \& friends did it), one may
produce a minimal instance by starting with the filter obtained by directly
transcribing of the problem, and applying a reduction algorithm. 

\begin{figure}[t!]
\shortvspace{-8pt}

	\begin{subfigure}[b]{0.35\columnwidth}
	\begin{center}
		\includegraphics[scale=0.8]{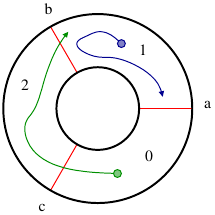}
        \shortvspace{-6pt}
		\caption{\label{fig:diag_donut}}
	\end{center}
	\end{subfigure}
	\begin{subfigure}[b]{0.55\columnwidth}
	\begin{center}
		\includegraphics[scale=0.37]{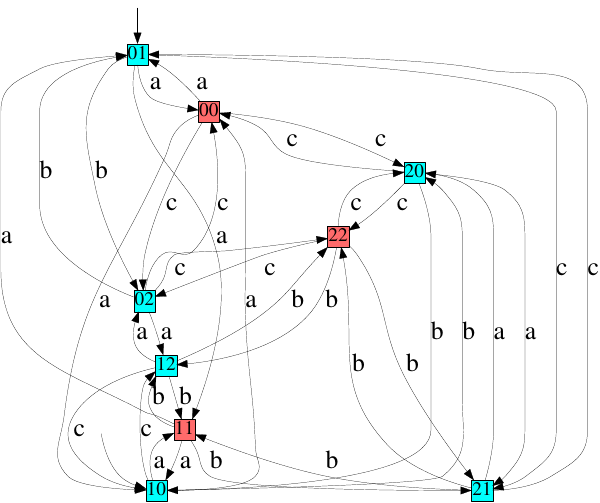}
        \shortvspace{-6pt}
		\caption{\label{fig:nd_donut}}
	\end{center}
	\end{subfigure}
	\begin{subfigure}[b]{0.35\columnwidth}
	\begin{center}
		\includegraphics[scale=0.23]{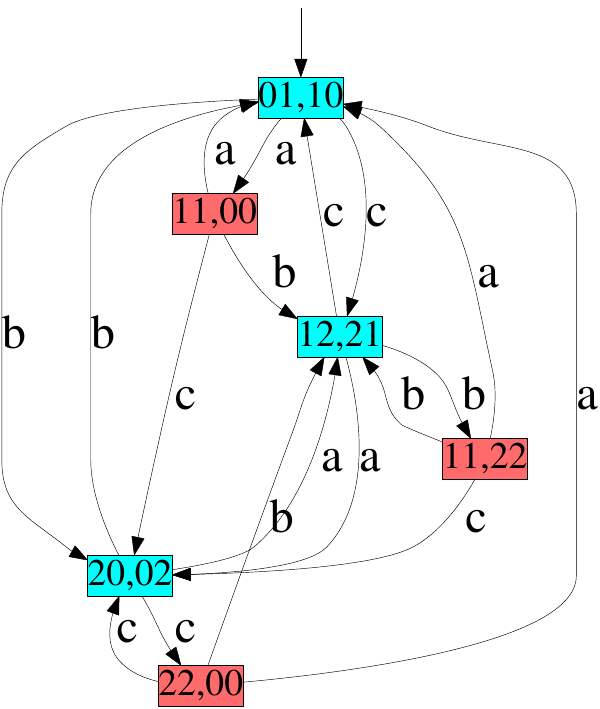}
        \shortvspace{-6pt}
		\caption{\label{fig:sdt_donut}}
	\end{center}
	\end{subfigure}
	\hfill
	\begin{subfigure}[b]{0.55\columnwidth}
	\begin{center}
		\includegraphics[scale=0.23]{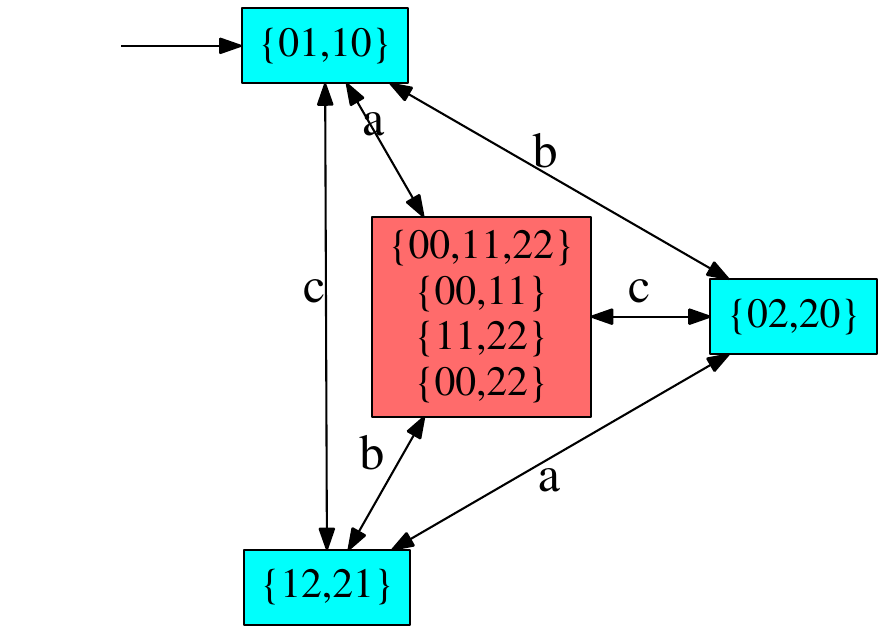}
        \shortvspace{-6pt}
		\caption{\label{fig:reduced_donut}}
	\end{center}
	\end{subfigure}
	\caption{
    \shortenXor{}{\small}
Combinatorial filter minimization, as originally motivated
by~\cite{tovar2014combinatorial}: (a) A pair of agents move in a circular 
environment
with three beam sensors, partitioned into regions,
indexed by numbers $0$, $1$ and $2$. Letters `\Sa', `\Sb', and `\Sc' denote
observations from each of the three beams. (b) A nondeterministic filter to
estimate whether these two agents are in the same region (red) or not (cyan).
(c)~The deterministic version of the same. (d) A \emph{minimal} deterministic filter.
\label{fig:donut}}
\shortvspace{-24pt}
\end{figure}

Most prior work on combinatorial filters, including all research on filter
reduction until now~\cite{o2017concise,saberifar2017combinatorial,rahmani2020integer,zhang2020cover,zhang2020AcceleratingCF}, concerns deterministic filters.  The present paper, in its
first part, presents a compelling practical case for the utility of filter
minimization methods that accommodate nondeterminism.  The second part
of this paper
examines the hardness of the minimization problem for filters with
nondeterministic inputs, including finding both deterministic and
nondeterministic minimizers for nondeterministic input filters. 
We show that, under commonly held computational complexity assumptions,
these problems are harder than the deterministic case. In what follows, we leverage hardness
results from automata theory to establish these facts, which has the important
added benefit of leading to a broader and clearer understanding of the
relationship between filter and automata minimization.

\section{The value of nondeterminism in minimizing combinatorial filters}

Existing research on combinatorial filter
reduction~~\cite{o2017concise,saberifar2017combinatorial,rahmani2020integer,zhang2020cover,zhang2020AcceleratingCF}
only deals with deterministic input filters and deterministic output filters
(or minimizers).  To understand the implications of this,  \shorten{let's return to the
example in \fig~\ref{fig:donut} in some detail.}{let's return to \fig~\ref{fig:donut} in some detail.}  To arrive at the 4-state
minimizer, we begin with the diagram in \fig~\ref{fig:diag_donut}.  Using the
assumption of continuous motion and beginning at a state representing the initial
agent configuration, we trace all possible events forward, coloring the conditions
encountered appropriately (red for together; cyan otherwise).  The result,
\fig~\ref{fig:nd_donut}, is not deterministic. To apply a minimization
algorithm, the filter must be converted to an equivalent one that is.  The process
of determinizing produces a filter (\fig~\ref{fig:sdt_donut}) that can then
be fed into a minimization method to yield
\fig~\ref{fig:reduced_donut}.  This procedure goes from 9 states, to 6,
before reaching 4\shortenXor{, finally.}{.}

\shortenXor{By way of contrast,}{But now} consider the nondeterministic 5-state filter in
\fig~\ref{fig:nd_wiki}. To find a minimal filter, it can be determinized (via
a power set construction\shorten{, see}~\cite{setlabelrss}) to track the $2^4=16$
distinct information states shown in \fig~\ref{fig:std_wiki}. 
\shortenXor{Once that is
minimized,}{Once minimized,} it gives the deterministic filter in \fig~\ref{fig:min_wiki}.  The
growth in the number of vertices, caused by the need to determinize for the
minimization algorithm, indicates trouble.  Not only does the set increase
exponentially, but this much larger object becomes the input for an exponential
cost algorithm (\shortenXor{since}{as} the problem is \nphard~\cite{o2017concise}).  Double
trouble.

To by-pass this expansion, one requires filter reduction methods that are
able to consume nondeterministic filters directly as input.  Looking again at
\figs.~\ref{fig:nd_wiki} and~\ref{fig:min_wiki}, the dramatic compression that
cancels the extreme expansion raises some questions. Do large deterministic
instances arising from small nondeterministic ones really induce hard
minimization problems? Or are they instead structured in some special \shortenXor{(sparse,
say, or otherwise low-density)}{(sparse or low-density)} form, \shortenXor{reflecting conservation of }{conserving }underlying
information? Computational complexity
\shortenXor{can provide clues: for example,}{provides clues: e.g.,}
in characterizing the space requirements of direct nondeterministic filter to
deterministic minimizer computation.

\begin{figure}[b]
	\vspace{-0.4cm}
	\begin{subfigure}[b]{0.2\linewidth}
	\begin{center}
		\includegraphics[scale=0.35]{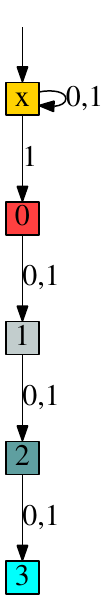}
		\caption{\label{fig:nd_wiki}}
	\end{center}
	\end{subfigure}
	\begin{subfigure}[b]{0.4\linewidth}
	\begin{center}
		\includegraphics[scale=0.35]{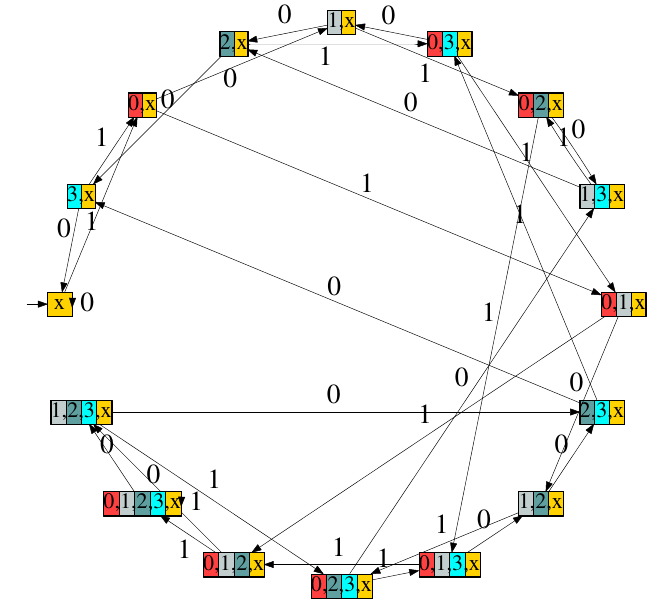}
		\caption{\label{fig:std_wiki}}
	\end{center}
	\end{subfigure}
	\hfill
	\begin{subfigure}[b]{0.28\linewidth}
	\begin{center}
		\includegraphics[scale=0.28]{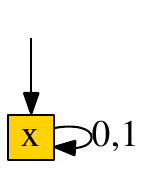}
		\vspace{1.3cm}
		\caption{\label{fig:min_wiki}}
	\end{center}
	\end{subfigure}
	\caption{
    \shortenXor{}{\small}
    (a) A nondeterministic filter. (b) A deterministic form
obtained via the power set construction. (c) A minimal filter.
\label{fig:wiki_example}} 
\end{figure}

If nondeterminism can be of added value as input to a minimization algorithm,
what about as its output?  In finite automata minimization, the smallest
nondeterministic automata can be smaller than any deterministic one.
Typical examples exploit the fact that accepting a string in the nondeterministic
automaton requires that \emph{some} tracing arrive at an accepting state. 
For filters, analogous instances fail owing to their differing semantics
(stated formally in the next section).
The analogous fact, however, does hold. A small 
example suffices to show this: the deterministic input filter given in
\fig~\ref{fig:det_input} has \num{19} states, but can be reduced to a
deterministic minimizer with size \num{14}. This minimizer has a single color
selected for each of the \num{5} leaf states which have a choice, and then merges 
identically colored leaves.
However, the filter can be shrunk still further: 
\fig~\ref{fig:nd_output} gives a 
nondeterministic minimizer possessing 
only \num{13} states.  Nondeterminism, then,
provides extra freedom
that can be exploited to further
reduce filter size. 

\shortenXor{To summarize: ($i$)~nondeterminism in the input allows minimization to proceed
directly on models of certain problems, potentially saving on expensive
intermediate steps; ($ii$)~permitting nondeterminism in the filters produced
as output can deliver greater compression.
Thus, nondeterminism may be of considerable practical importance.}{
To summarize: nondeterminism may be practical importance for 
two reasons: ($i$)~nondeterminism in the input allows minimization to proceed
directly on models of certain problems, potentially saving on expensive
intermediate steps; ($ii$)~permitting nondeterminism in the filters produced
as output can deliver greater compression.}


\begin{figure}
    \shortvspace{5pt}

	\begin{subfigure}[b]{\linewidth}
	\begin{center}

	\scalebox{0.5}{
	\newcommand\vs{1.5}
\newcommand\vS{3.0}
\newcommand\hsb{1.0}
\newcommand\hsa{2.0}
\newcommand\hsahalf{0.5}
\makeatletter
\tikzset{circle split part fill/.style  args={#1,#2}{%
 alias=tmp@name, 
  postaction={%
    insert path={
     \pgfextra{%
     \pgfpointdiff{\pgfpointanchor{\pgf@node@name}{center}}%
                  {\pgfpointanchor{\pgf@node@name}{east}}%
     \pgfmathsetmacro\insiderad{\pgf@x}
      \fill[#1] (\pgf@node@name.base) ([xshift=-\pgflinewidth]\pgf@node@name.east) arc
                          (0:180:\insiderad-\pgflinewidth)--cycle;
      \fill[#2] (\pgf@node@name.base) ([xshift=\pgflinewidth]\pgf@node@name.west)  arc
                           (180:360:\insiderad-\pgflinewidth)--cycle;            
         }}}}}  

\tikzset{
diagonal fill/.style 2 args={fill=#2, path picture={
\fill[#1, sharp corners] (path picture bounding box.south west) -|
                         (path picture bounding box.north east) -- cycle;}},
reversed diagonal fill/.style 2 args={fill=#2, path picture={
\fill[#1, sharp corners] (path picture bounding box.north west) |- 
                         (path picture bounding box.south east) -- cycle;}}
}
 \makeatother

{
\begin{tikzpicture}[shorten >=1pt,node distance=1cm, on grid, auto, initial text
=] 
\tikzset{every state/.style={semithick, minimum size=10pt, inner
sep=8pt,rectangle}}
\tikzset{every path/.style={thick, font=\Large}}
\tikzset{initial distance=1.2cm}

\node[state, initial, fill=white] (q0)   {}; 
   \node[state] (q1) [below left=\vs and 6cm of q0, fill=cyan!20] {}; 
   \node[state] (q2) [right=3cm of q1,              fill=cyan!20] {}; 
   \node[state] (q5) [right=3cm of q2,              fill=cyan!20] {}; 
   \node[state] (q3) [right=3cm of q5,              fill=cyan!20] {}; 
   \node[state] (q4) [right=3cm of q3,              fill=cyan!20] {}; 
   \node[state] (q6) [below left=\vs cm and \hsb cm of q1, fill=green!50] {}; 
   \node[state] (q7) [below =\vs cm of q1, diagonal fill={teal!50}{violet!50}] {}; 
   \node[state] (q8) [below right=\vs cm and 1.5cm of q1, fill=red!50] {}; 
   \node[state] (q9) [below =\vs cm of q2, fill=brown!90] {}; 
   \node[state] (q10) [below right=\vs cm and \hsb cm of q2, diagonal
fill={violet!50}{lime!20}] {}; 

   \node[state] (q12) [below =\vs cm  of q3, fill=lime!20] {}; 
   \node[state] (q13) [below left=\vs cm and 1.5 cm of q4, fill=yellow!75] {}; 
   \node[state] (q15) [below right =\vs cm and \hsb cm of q4, fill=teal!50] {}; 
   \node[state] (q16) [below left=\vs cm and \hsb cm of q5, fill=olive!40] {}; 
   \node[state] (q17) [below =\vs cm of q5, diagonal fill={yellow!75}{red!50}] {}; 
   \node[state] (q18) [below right=\vs cm and \hsb cm of q5, fill=violet!50] {}; 
   \node[state] (q11) [below left =\vs cm and \hsb of q3, diagonal fill={brown!90}{olive!40}] {}; 
   \node[state] (q14) [below=\vs cm of q4, diagonal fill={olive!40}{green!50}] {}; 

    \path[->] 
    (q0) edge node [pos=0.7, sloped, above] {\rm a} (q1)
    (q0) edge node [pos=0.7, sloped, above] {\rm b} (q2)
    (q0) edge node [pos=0.7, sloped, above] {\rm d} (q3)
    (q0) edge node [pos=0.7, sloped, above] {\rm e} (q4)
    (q0) edge node [pos=0.4, sloped, right, rotate=90] {\rm c} (q5)

    (q1) edge node [pos=0.5, sloped, above] {\rm a} (q6)
    (q1) edge node [pos=0.5, sloped, right, rotate=90] {\rm c} (q7)
    (q1) edge node [pos=0.5, sloped, above] {\rm b} (q8)
    (q2) edge node [pos=0.5, sloped, above] {\rm b} (q8)
    (q2) edge node [pos=0.5, sloped, left, rotate=90] {\rm a} (q9)
    (q2) edge node [pos=0.5, sloped, above] {\rm c} (q10)
    (q3) edge node [pos=0.5, sloped, right, rotate=90] {\rm c} (q12)
    (q3) edge node [pos=0.5, sloped, above] {\rm b} (q13)
    (q4) edge node [pos=0.5, sloped, above] {\rm b} (q13)
    (q4) edge node [pos=0.5, sloped, above] {\rm c} (q15)
    (q5) edge node [pos=0.5, sloped, above] {\rm a} (q16)
    (q5) edge node [pos=0.5, sloped, right, rotate=90] {\rm b} (q17)
    (q5) edge node [pos=0.5, sloped, above] {\rm c} (q18);

    \path[->] 
    (q3) edge node [pos=0.5, sloped, above] {\rm a} (q11)
    (q4) edge node [pos=0.5, sloped, right, rotate=90] {\rm a} (q14);

    \end{tikzpicture}
}
	\caption{\label{fig:det_input}}
	\end{center}
	\end{subfigure}
    \shortvspace{-5pt}

	\begin{subfigure}[b]{\linewidth}
	\begin{center}
	\scalebox{0.5}{
	\newcommand\vs{1.5}
\newcommand\vS{2.1}
\newcommand\hs{1.0}
\newcommand\hsb{1.0}
\newcommand\hsa{2.5}
\newcommand\hsbf{3.5}

{
\begin{tikzpicture}[shorten >=1pt,node distance=1cm, on grid,auto]
\tikzset{every state/.style={semithick, minimum size=10pt, inner sep=8pt,
rectangle}}
\tikzset{every path/.style={thick, font=\Large}}
\tikzset{initial distance=1.2cm, initial text=}
\node[state, initial, fill=white] (q0)   {}; 
\node[state] (p1) [below left=\vs cm and 5cm of q0, fill=cyan!20] {}; 
   \node[state] (p2) [right=\hsbf cm of p1, fill=cyan!20] {}; 
   \node[state] (p3) [right=\hsbf cm of p2, fill=cyan!20] {}; 
   \node[state] (p4) [right=\hsbf cm of p3, fill=cyan!20] {}; 
   \node[state] (p5) [below left=\vs and \hsb of p1, fill=green!50] {}; 
   \node[state] (p6) [below right=\vs and \hsb of p1, fill=teal!50] {}; 
   \node[state] (p7) [below left=\vs and \hsb of p2,  fill=red!50] {}; 
   \node[state] (p8) [below right=\vs and \hsb of p2, fill=violet!50] {}; 
   \node[state] (p9) [below left=\vs and \hsb of p3,  fill=brown!90] {}; 
   \node[state] (p10) [below right=\vs and \hsb of p3,fill=lime!20] {}; 
   \node[state] (p11) [below left=\vs and \hsb of p4, fill=yellow!75] {}; 
   \node[state] (p12) [below right=\vs and \hsb of p4,fill=olive!40] {}; 
   
    \path[->] 
    (q0) edge node [pos=0.7, sloped, above, inner sep=0.2ex] {\rm a,e} (p1)
    (q0) edge node [pos=0.5, sloped, below, inner sep=0.2ex] {\rm a,b,c} (p2)
    (q0) edge node [pos=0.7, sloped, above, inner sep=0.2ex] {\rm b,d} (p3)
    (q0) edge node [pos=0.7, sloped, above, inner sep=0.2ex] {\rm c,d,e} (p4)
    (p1) edge node [pos=0.7, above, inner sep=1.8ex] {\rm a} (p5)
    (p1) edge node [pos=0.7, above, inner sep=1.8ex] {\rm c} (p6)
    (p2) edge node [pos=0.7, above, inner sep=1.8ex] {\rm b} (p7)
    (p2) edge node [pos=0.7, above, inner sep=1.8ex] {\rm c} (p8)
    (p3) edge node [pos=0.7, above, inner sep=1.8ex] {\rm a} (p9)
    (p3) edge node [pos=0.7, above, inner sep=1.8ex] {\rm c} (p10)
    (p4) edge node [pos=0.7, above, inner sep=1.8ex] {\rm b} (p11)
    (p4) edge node [pos=0.7, above, inner sep=1.8ex] {\rm a} (p12);

\end{tikzpicture}
}}
	\end{center}
    \shortvspace{-5pt}
	\caption{\label{fig:nd_output}}
	\end{subfigure}
	\caption{
    \shortenXor{}{\small}
    (a) A \num{19}-state deterministic filter that has no deterministic minimizer with
	fewer than \num{14} states.  (b) A minimal nondeterministic \num{13}-state
	minimizer for the filter above.\label{fig:mobi}}
	\vspace{-0.4cm}
\end{figure}

\section{Combinatorial filters and their minimization}
We first give our model of combinatorial filters:
\begin{definition}[procrustean filter~\cite{setlabelrss}]
A \defemp{procrustean filter}, \defemp{p-filter} or \defemp{filter} for short,
is a tuple $(V, V_0, Y, \tau, C, c)$ where $V$ is the set of states, $V_0$ is
the set of initial states, $Y$ is the set of of observations, $\tau: V\times
V\rightarrow 2^Y$ is the transition function, $C$ is the set of outputs
(or colors), and $c: V\to 2^{C}\setminus\{\emptyset\}$ is the output function.
\end{definition}
We write the states, initial states and observations for a filter
$\structure{F}$ as $V(\structure{F})$, $V_0(\structure{F})$ and
$Y(\structure{F})$. A filter $\structure{F}=(V, V_0, Y, \tau, C, c)$ is
\defemp{deterministic}, if $|V_0|=1$ and for every $v_1, v_2, v_3\in V$ with
$v_2\neq v_3$, $\tau(v_1, v_2)\cap \tau(v_1, v_3)=\emptyset$. Otherwise, we say
$\structure{F}$ is \defemp{nondeterministic}. 
A filter can also be viewed as a graph with states being its vertices, and
transitions being directed edges.

In a filter $\structure{F}=(V, V_0, Y, \tau, C, c)$, an observation sequence (or
a string) $s=y_1 y_2\dots y_n\in Y^*$ \defemp{reaches} a state $w$ from state
$v$, if there exists a sequence of states $w_0, w_1, \dots, w_n$ in
$\structure{F}$, such that $w_0=v, w_n=w$, and $\forall i\in \lbrace
1,2,\dots,n\rbrace$, $y_i\in \tau(w_{i-1}, w_i)$. In a filter $\structure{F}$, for
every state $v$, if there exists a string that reaches $v$ from some initial
state, then we say $\structure{F}$ is \defemp{trim}. Any filter that is not
trim can be made so by removing the states that are not reached by any
string from the initial states. \shortenXor{Without any loss of generality, we
consider filters that are trim.}{
We consider filters that are trim, w.l.o.g.
}

We collect the set of all states reached by $s$ from some initial state $v_0\in
V_0$, and denote it as $\reachedv{\structure{F}}{s}$. Specifically, for the empty
string $\epsilon$, we have $\reachedv{\structure{F}}{\epsilon}=V_0$. If no 
states are reached by some string from any initial state, then we say that 
string \defemp{crashes} on $\structure{F}$. The set of strings that do not crash
on $\structure{F}$ is called the \defemp{interaction language} of
$\structure{F}$, and is written as $\ILanguage{\structure{F}}=\lbrace s\in Y^{*}\mid
\reachedv{\structure{F}}{s}\neq\emptyset\rbrace$.  The \defemp{output of
string} $s$ on filter $\structure{F}$ is the set of outputs (or colors) of all
states reached by $s$ from some initial state, and is written as
$\reachedc{\structure{F}}{s}=\cup_{v\in \reachedv{\structure{F}}{s}} c(v)$. 

\shortenXor{When minimizing some filter, we are interested in reduced filters
that simulate the given filter in terms of outputs on its strings:}{In minimizing a filter, we are interested in reduced filters
that simulate the given filter in terms of outputs on its strings:}%
\begin{definition}[output simulating]
Let $\structure{F}$ and $\structure{F}'$ be two filters, then $\structure{F}'$
\defemp{output simulates} $\structure{F}$ if the following
properties hold: ($i$) language inclusion: $\ILanguage{\structure{F}}\subseteq
\ILanguage{\structure{F}'}$; ($ii$) output consistency: $\forall s\in
\ILanguage{\structure{F}}$, $\reachedc{\structure{F}'}{s}\subseteq
\reachedc{\structure{F}}{s}$.
\end{definition}

\shortenXor{Intuitively, this}{This} requires that $\structure{F}'$ be capable of processing all the inputs
which $\structure{F}$ can, and produce outputs 
that $\structure{F}$ could. The input set is no smaller; the set of outputs
no larger.


\shortenXor{We want to search for a minimal filter that output simulates the original filter:}{We wish to find minimal filters:}
\ourproblem{\textbf{Filter Minimization (\pfm)}}
{A filter $\structure{F}$.}
{A filter $\structure{F}^{\dagger}$ with fewest states, such that $\structure{F}^{\dagger}$ output
simulates $\structure{F}$.
}
This is a generalization of its deterministic version in the
work~\cite{zhang2020cover,zhang2020AcceleratingCF}, which dealt only with
deterministic input and deterministic minimizer. We use `{\sc pf}' to denote the
fact that both the input and output of this problem can be general
nondeterministic p-filters, and use `{\sc m}' for minimization. Additionally,
we designate the problem of producing a \emph{deterministic minimizer} for a
nondeterministic input filter `\pfdm', a four-letter word where `{\sc
d}' stands for deterministic.

\section{Background and preliminaries}
We make use of some known facts from automata theory.


A finite automaton (NFA) is a tuple $(Q, Q_0, \Sigma, \delta, A)$, where $Q$,
$Q_0$, $\Sigma$, $\delta$, $A$ are the states, initial states, alphabet
(observations), transition function, and accepting states. 
Both filters and NFAs are similar, both being transition structures. 
Different from a filter, an NFA
$\structure{A}$ has accepting states not outputs (colors). 
For automata, we are interested in the strings that reach some accepting
states, which we term the accepting language $\ALanguage{\structure{A}}$. An
NFA with a singleton $Q_0$ and deterministic transition structure is also
called a DFA.

Here are some results from automata theory:
\begin{lemma}[\shortenXor{NFA equivalence}{\!\!}\cite{stockmeyer1973word}] Given two 
NFAs $\structure{A}$ and $\structure{B}$, it is \pspacecomplete to check if
$\ALanguage{\structure{A}}=\ALanguage{\structure{B}}$.  
\label{lem:nfa_equiv}
\end{lemma}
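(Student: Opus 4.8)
The plan is to prove the two halves of \pspacecomplete-ness separately: containment in \pspace, and \pspacehard-ness.

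\textbf{Membership.} I would show that \emph{non}-equivalence is decidable in nondeterministic polynomial space and then appeal to Savitch's theorem together with the closure of \pspace under complement. The key observation is that $\ALanguage{\structure{A}}\neq\ALanguage{\structure{B}}$ iff some string is accepted by exactly one of the two. A nondeterministic machine can hunt for such a witness without ever storing it: it keeps the set $S_A\subseteq Q(\structure{A})$ of states currently reachable in $\structure{A}$ and, in lock-step, the set $S_B\subseteq Q(\structure{B})$; at each step it guesses the next input symbol and advances both sets by one round of the (lazily performed) subset construction; it accepts the moment exactly one of $S_A, S_B$ meets the accepting states. Each of $S_A, S_B$ is a subset of a set of polynomial size, hence fits in polynomial space, and a counter bounded by $2^{|Q(\structure{A})|+|Q(\structure{B})|}$ (polynomially many bits) caps the search length --- a longer search must revisit a pair $(S_A,S_B)$ and so finds nothing new. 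This puts non-equivalence in \textsf{NPSPACE}, and therefore equivalence in \pspace.

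\textbf{Hardness.} The cleanest route is through NFA \emph{universality}, i.e.\ deciding whether $\ALanguage{\structure{A}}=\Sigma^*$: this reduces to equivalence in one step, since $\structure{A}$ is universal iff $\ALanguage{\structure{A}}=\ALanguage{\structure{B}}$ for the fixed single-state NFA $\structure{B}$ (same alphabet) that accepts everything. So it is enough to prove universality \pspacehard, which I would do by a generic simulation: given a deterministic, space-$p(n)$ Turing machine $M$ and input $w$ (WLOG halting, with a unique accepting configuration), encode a run of $M$ as a string listing its successive configurations, each padded to length $p(|w|)$ and separated by a marker, over the alphabet $\Gamma\cup(Q\times\Gamma)\cup\{\#\}$. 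I would build an NFA $\structure{A}$ that accepts exactly the strings that are \emph{not} valid accepting runs of $M$ on $w$: it nondeterministically selects one of a constant number of possible defects to certify --- a wrong initial block, a non-accepting final block, a syntactically malformed block layout, or a pair of adjacent blocks $C_i\#C_{i+1}$ containing a cell of $C_{i+1}$ that violates $M$'s (deterministic, hence local) transition rule relative to the three aligned cells of $C_i$. The first three checks cost $O(p(|w|))$ states; for the last, $\structure{A}$ guesses the offending cell position, walks to it while remembering the relevant window of $C_i$, counts $p(|w|)+1$ symbols ahead, and compares --- again $O(p(|w|))$ states. Thus $\structure{A}$ has polynomially many states and is built in polynomial time, and $M$ accepts $w$ iff a valid accepting run exists iff $\structure{A}$ is \emph{not} universal; complementing (and using that \pspace is closed under complement) completes the reduction. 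Together with membership, this yields \pspacecomplete-ness.

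I expect the hardness direction to be the real work, and within it the consistency check between consecutive configuration blocks: this is precisely where nondeterminism pays off, since guessing \emph{where} the run goes wrong keeps $\structure{A}$ polynomial, whereas a deterministic automaton for the same complement language would be forced to be exponential (the NFA-to-DFA blow-up). Some care will also be needed to make the encoding and the ``malformed layout'' test precise enough that the strings $\structure{A}$ rejects are \emph{exactly} the accepting runs of $M$ on $w$ --- e.g.\ pinning down the block length, the placement of the markers, and where the single $Q\times\Gamma$ symbol may appear within a block.
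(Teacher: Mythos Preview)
Your proof sketch is correct and is essentially the classical Meyer--Stockmeyer argument. Note, however, that the paper does \emph{not} supply its own proof of this lemma: it is stated in the background section as a known fact and attributed to Stockmeyer (1973), alongside the companion lemmas on NFA universality and DFA union universality, all of which are simply cited. So there is no ``paper's own proof'' to compare against; you have reconstructed the standard proof that the citation points to.

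One small remark on presentation: your hardness argument proves NFA universality \pspacehard from scratch and then reduces universality to equivalence. Since the paper separately states NFA universality as a cited \pspacecomplete result (the lemma immediately following this one), in the paper's internal logic the equivalence lemma could equally well be justified by the one-line reduction you give --- take $\structure{B}$ to be the single-state universal NFA --- together with the membership argument. Your longer route is of course self-contained and correct.
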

\shortvspace{-8pt}

\begin{lemma}[\shortenXor{NFA universality}{\!\!}\cite{meyer1972equivalence}] 
For a given NFA $\structure{A}=(Q, Q_0, \Sigma, \delta, A)$, it is \pspacecomplete
to check whether \mbox{$\ALanguage{\structure{A}}=\Sigma^{*}$}.  
\label{lem:nfa_univ}
\end{lemma}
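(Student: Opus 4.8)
The statement couples an upper bound (membership in \pspace) with a matching lower bound (\pspacehard), and I would prove the two halves separately. The upper bound is nearly immediate: universality reduces to the equivalence problem of the preceding lemma by testing $\ALanguage{\structure{A}} = \ALanguage{\structure{U}}$ against a fixed one-state NFA $\structure{U}$ with $\ALanguage{\structure{U}} = \Sigma^{*}$, so membership in \pspace\ is inherited. Should a self-contained argument be preferred, I would instead observe that the complementary question $\ALanguage{\structure{A}} \neq \Sigma^{*}$ lies in nondeterministic polynomial space: a machine guesses a witness string one symbol at a time while carrying only the subset of $Q$ reachable from $Q_0$ along the prefix read so far---a single bit-vector of length $|Q|$, as in the power set construction---and accepts as soon as this subset becomes disjoint from $A$. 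This runs in polynomial space, so non-universality, and hence universality, is in \pspace\ by Savitch's theorem.

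The real work is the lower bound, which I would obtain by reduction from the acceptance problem of a fixed Turing machine $M$ running in space $p(n)$ on inputs of length $n$; this problem is \pspacehard\ essentially by the definition of \pspace. Given an input $w$ with $|w| = n$, I construct in polynomial time an NFA $\structure{A}_{M,w}$ over an alphabet encoding configurations of $M$---tape contents padded to length $p(n)$, with the head position and control state recorded on one cell---so that a halting accepting run corresponds to a string $\# C_0 \# C_1 \# \cdots \# C_k \#$. I design $\structure{A}_{M,w}$ so that $\ALanguage{\structure{A}_{M,w}}$ is exactly the set of strings that are \emph{not} accepting computations of $M$ on $w$; then $\ALanguage{\structure{A}_{M,w}} = \Sigma^{*}$ iff no accepting computation exists, i.e.\ iff $M$ rejects $w$, which is the desired reduction. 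The NFA nondeterministically selects, and then checks, one ``reason for rejection'': a malformed delimiter pattern; $C_0$ disagreeing with the fixed initial configuration on $w$; no $C_i$ being accepting; or some transition $C_i \to C_{i+1}$ being illegal. The first three are realized by gadgets of $O(p(n))$ states. The fourth is the heart of the construction: since $C_i$ and $C_{i+1}$ have equal length $p(n)$, the symbol at position $j$ of $C_{i+1}$ is determined by the symbols at positions $j-1, j, j+1$ of $C_i$ through $M$'s local transition rules, so the gadget guesses a position, stores the relevant window of $C_i$, counts off $p(n)+1$ symbols to cross the configuration boundary, and verifies that the window it then reads in $C_{i+1}$ violates the rule. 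Counting to $p(n)+1$ costs $O(p(n))$ states, so $\structure{A}_{M,w}$ has polynomially many states and is constructible in polynomial time.

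I expect the main obstacle to be precisely this last gadget: one must show that the set of nondeterministically guessable defects coincides \emph{exactly} with the set of non-computations---every flawed string is accepted on \emph{some} branch while the string encoding a legitimate accepting run is accepted on \emph{no} branch---and one must handle the boundary cases carefully (the leftmost and rightmost tape cells, the head sitting next to an end marker, and the $\#$ symbols), all while keeping the position-alignment counter polynomially sized. Once that is in place, correctness of the reduction and its polynomial running time are routine, and combining the two halves shows that NFA universality is \pspacecomplete. (An alternative is to reduce from \textsc{qbf}, with the non-accepted strings encoding assignments to the innermost matrix consistent with the quantifier prefix, but the Turing-machine reduction exhibits the crucial counting gadget most clearly.)
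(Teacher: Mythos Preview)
Your proposal is a correct sketch of the classical Meyer--Stockmeyer argument, but there is nothing to compare it against: the paper does not prove this lemma. It appears in the ``Background and preliminaries'' section as a cited fact from~\cite{meyer1972equivalence}, stated without proof and used only as a black box in the hardness reductions that follow. So your write-up, while sound, goes well beyond what the paper supplies.
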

\shortvspace{-8pt}

\begin{lemma}[\shortenXor{DFA union universality}{\!\!}\cite{rampersad2012computational}]
Given a set of DFAs $\{\structure{A}_1$, $\structure{A}_2$, \dots,
$\structure{A}_n\}$ with common alphabet $\Sigma$, it is \pspacecomplete to check
if $\cup_{1\leq i\leq n} \ALanguage{\structure{A}_i}=\Sigma^{*}$.  
\label{lem:Udfa_univ}
\end{lemma}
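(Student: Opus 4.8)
The plan is to establish \pspace membership and \pspacetext-hardness separately. For membership it is easier to work with the complementary question: is $\bigcup_{i} \ALanguage{\structure{A}_i} \neq \Sigma^{*}$, i.e., is there a string rejected by \emph{every} $\structure{A}_i$? Because each $\structure{A}_i$ is deterministic, a nondeterministic machine can guess such a string one symbol at a time while storing nothing more than the tuple of current states $(s_1,\dots,s_n)$, one state per automaton, updating each $s_i$ by its transition function; this uses space $\sum_i\lceil\log|Q_i|\rceil$, polynomial in the input, and the machine accepts the moment all $s_i$ are simultaneously non-accepting (the empty string being handled by the tuple of initial states). Capping the run at $\prod_i|Q_i|$ steps with a counter — itself only $\sum_i\lceil\log|Q_i|\rceil$ bits, since a shortest witness has that length in the product automaton — makes the machine halt. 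So non-universality is decidable in nondeterministic polynomial space; by Savitch's theorem it is in \pspace, and since \pspace is closed under complement, DFA union universality is in \pspace.

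For \pspacetext-hardness I would reduce from the acceptance problem of a fixed \pspacecomplete language $L$, decided by a deterministic Turing machine $M$ that on an input $x$ of length $m$ uses at most $p(m)$ tape cells. Encode a computation of $M$ on $x$ in the usual way as a string: a sequence of configurations, each padded to length $p(m)$ and separated by a marker, over an alphabet $\Gamma$ that records the head position and control state. I would then build a collection of \emph{deterministic} automata over this alphabet whose common accepting language is exactly the set of strings encoding a valid accepting run of $M$ on $x$: one automaton checks the string begins with the initial configuration, one checks it ends in an accepting configuration, one checks the block-and-marker formatting, and, for each tape position $i\in\{1,\dots,p(m)\}$, one automaton that counts positions modulo $p(m)+1$ to locate position $i$ within every configuration, remembers the three-cell window there, and verifies that $p(m)+1$ symbols later position $i$ has been updated consistently with the transition rules of $M$ — each such automaton has $O(|\Gamma|^{3}p(m))$ states, and there are $O(p(m))$ of them, so the whole family is of polynomial size. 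Thus the intersection of their languages is nonempty iff $M$ accepts $x$. Complementing each automaton (swapping accepting and non-accepting states of a DFA yields a DFA of the same size) and invoking De Morgan, the union $\bigcup_i \ALanguage{\negation{D_i}}$ of the complemented languages equals $\Sigma^{*}$ exactly when $M$ rejects $x$; as $\overline{L}$ is \pspacecomplete too, this is a polynomial-time many-one reduction establishing \pspacetext-hardness. (Alternatively, one may invoke the classical fact, due to Kozen, that emptiness of intersection of DFAs is \pspacecomplete, and pass to complements.)

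The step I expect to be the crux is the transition-consistency automata. The naive idea of a single DFA comparing each cell with the corresponding cell of the next configuration fails, because those cells are $p(m)+1$ positions apart and remembering an arbitrary window that far back needs exponentially many states; the fix is to split the work across one DFA per tape position, each of which only ever remembers \emph{one} three-cell window together with a bounded counter. Care is also needed so that the automata collectively characterize \emph{exactly} the valid accepting computations — no omitted constraint and no spurious accepted string — since the equivalence between intersection-nonemptiness and acceptance of $x$ rests on that. The membership direction, by contrast, is routine once Savitch's theorem and the parallel simulation of DFAs are in hand.
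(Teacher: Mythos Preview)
The paper does not prove this lemma at all: it is stated as background and attributed to \cite{rampersad2012computational}, with no accompanying proof environment. So there is nothing to compare against directly.

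That said, your proposal is correct and is the standard argument. The membership half --- guess a witness string symbol by symbol while carrying the $n$-tuple of current states, bound the run length by $\prod_i |Q_i|$, then invoke Savitch and closure under complement --- is exactly how one shows NFA universality (and hence DFA-union universality) lies in \pspace. For hardness, your construction is precisely Kozen's reduction showing DFA intersection emptiness is \pspacehard, followed by complementation of each DFA and De~Morgan to convert intersection-emptiness into union-universality; your observation that reducing from $\overline{L}$ rather than $L$ is harmless because \pspace is closed under complement is the right way to close the loop. The one place to be a little careful in writing it out is ensuring each ``position-$i$'' DFA really is deterministic (it must commit, at every input symbol, to a unique next state of its modular counter and stored window), but with the design you sketch this is routine.
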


\section{Complexity of filter minimization}
\shortenXor{In this section, we leverage prior hardness results from automata
theory to show that the problems of finding minimizers, including deterministic
and nondeterministic minimizers, for nondeterministic input filters are hard.
Specifically, we will show that the decision version of \pfm is \pspacecomplete,
and the decision version of \pfdm is \pspacehard.
}
{We are now ready to show that finding minimizers for nondeterministic input filters is hard.}

The decision problem of the \pfm problem is:
\decproblem{\textbf{P-filter Minimization (\pfmdec)}}
{A filter $\structure{F}$ and $k\in \PositiveNaturals$.}
{\textsc{Yes} \shortenXor{if}{only if} there exists \shortenXor{a p-filter}{some} $\structure{F}^{\dagger}$ with no more than $k$
states, such that $\structure{F}^{\dagger}$ \oss $\structure{F}$. \shorten{\textsc{No} otherwise.}
}
\shortenXor{Analogously, we denote the decision version of \pfdm as \pfdmdec.}
{Analogously, \pfdmdec is the decision version of \pfdm.}

\subsection{The hardness of \pfm and \pfdm}
Now, we will show that the decision versions of \pfm and \pfdm
are, respectively, \pspacecomplete and \pspacehard. Consequently, both \pfm and \pfdm are
\pspacehard.

\shortenXor{It is helpful, as a first step, to introduce a product operator for
constructing of the product graph of two filters in polynomial time. This
operator will be used to check the output simulation requirement.}{
To check the output simulation requirement in polynomial time, the following
filter product will be helpful.}
\begin{definition}[tensor product]
Given \shorten{two }filters $\structure{F}_1=(V^1, V^1_0,  Y^1, \tau^1, C^1, c^1)$ and
$\structure{F}_2=(V^2, V^2_0, Y^2, \tau^2, C^2, c^2)$, their product,
a graph denoted 
$(\structure{F}_1\fprod \structure{F}_2$), 
is constructed to capture
strings in $\ILanguage{\structure{F}_1}$ \shortenXor{as follows:}{via:}
\begin{enumerate}
\item List all pairs of vertices in $V(F_1)\times (V(F_2)\cup\{\placeholder\})$,
where \placeholder is a placeholder for an empty vertex. 
\item Mark vertex $(v,w)$ an initial state in graph 
$(\structure{F}_1\fprod \structure{F}_2)$. 
\item Build a transition from $(v, w)$ to $(v', w')$ under label $y$ if
$y\in\tau^1(v, v')$ and $y\in \tau^2(w, w')$. Notice that if $y$ is not an
outgoing label of vertex $v$, then we say $y\in \tau^1(v,\placeholder)$.  
\shortenXor{
\item Remove the pairs that are not reached from any initial state in
$(\structure{F}_1\fprod \structure{F}_2)$.}
{
\item Remove the pairs reached from any initial state.}
\end{enumerate}
\end{definition}
\shortenXor{Notice that the}{The} tensor product of two filters is a transition structure with initial
states, i.e., a graph.

\begin{lemma}
\label{lm:fm_np}
\pfmdec is in \pspace.
\end{lemma}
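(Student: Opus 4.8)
The plan is to show membership in \pspace\ by exhibiting a nondeterministic polynomial-space procedure (and invoking Savitch's theorem, since \pspace $=$ \textsc{NPSPACE}). Given an instance $(\structure{F}, k)$, the witness minimizer $\structure{F}^{\dagger}$ has at most $k$ states, and $k \le |V(\structure{F})|$ without loss of generality (the identity is always a candidate), so $\structure{F}^{\dagger}$ is describable in space polynomial in the input. The algorithm first nondeterministically guesses $\structure{F}^{\dagger}$ in its entirety --- its states, initial states, transition function, and output function over the alphabet $Y(\structure{F})$ --- writing it on the tape. It then must verify that $\structure{F}^{\dagger}$ output simulates $\structure{F}$, i.e., the two conditions of the output-simulating definition: ($i$) $\ILanguage{\structure{F}} \subseteq \ILanguage{\structure{F}^{\dagger}}$, and ($ii$) for every $s \in \ILanguage{\structure{F}}$, $\reachedc{\structure{F}^{\dagger}}{s} \subseteq \reachedc{\structure{F}}{s}$.

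The key step is checking both conditions in polynomial space. I would use the tensor product $(\structure{F} \fprod \structure{F}^{\dagger})$ from the preceding definition, whose vertices are pairs $(v, w)$ with $v \in V(\structure{F})$ and $w \in V(\structure{F}^{\dagger}) \cup \{\placeholder\}$, and which is computable in polynomial time (hence fits in polynomial space). A string $s \in \ILanguage{\structure{F}}$ violates condition ($i$) exactly when tracing $s$ in the product leads only to pairs whose second component is \placeholder\ (the string survives in $\structure{F}$ but dies in $\structure{F}^{\dagger}$); it violates condition ($ii$) exactly when some reachable pair $(v, w)$ with $w \ne \placeholder$ has $c(w) \not\subseteq$ the union of $c(v')$ over all $v'$ with $(v',\cdot)$ simultaneously reachable by $s$. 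Both are reachability-style questions on the product graph, but the subtlety is that condition ($ii$) refers to the full set $\reachedv{\structure{F}}{s}$, not a single vertex, so I would instead work with a determinized-on-the-fly subset-tracking: maintain the set $\reachedv{\structure{F}}{s}$ (a subset of $V(\structure{F})$, stored in polynomial space) alongside a single guessed state of $\structure{F}^{\dagger}$ reachable by the same prefix, and nondeterministically extend the string symbol by symbol, looking for a prefix that exposes a violation. Because we only ever store one subset of $V(\structure{F})$ and $O(1)$ states of $\structure{F}^{\dagger}$ plus a symbol counter, the space used is polynomial; we do not need to bound the length of the violating string since the search is nondeterministic and halts as soon as a violation surfaces (and if some violation exists, one of length at most exponential exists, which is irrelevant to the space bound). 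If no branch finds a violation, $\structure{F}^{\dagger}$ is a valid minimizer, so we accept.

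Putting it together: the overall nondeterministic machine guesses $\structure{F}^{\dagger}$, then runs the violation-search as a \textsc{co-NPSPACE} subroutine --- we need \emph{no} violating string to exist. Since $\pspace = \textsc{NPSPACE} = \textsc{co-NPSPACE}$ by Savitch, this subroutine is itself in \pspace, and the outer existential guess of $\structure{F}^{\dagger}$ keeps us in \textsc{NPSPACE} $=$ \pspace. I expect the main obstacle to be the bookkeeping around condition ($ii$): one must be careful that it is the set $\reachedv{\structure{F}}{s}$ on the $\structure{F}$ side (tracked exactly, as a subset) but only a nondeterministically chosen tracing on the $\structure{F}^{\dagger}$ side, because output consistency requires \emph{every} state of $\structure{F}^{\dagger}$ reached by $s$ to have colors contained in $\reachedc{\structure{F}}{s}$ --- so the violation search must guess which state of $\structure{F}^{\dagger}$ to follow, while deterministically propagating the $\structure{F}$-subset. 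Getting this asymmetry right, and confirming the placeholder convention from the tensor-product definition handles the language-inclusion direction cleanly, is the crux; the space accounting itself is then routine.
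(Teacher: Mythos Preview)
Your approach is correct and reaches the same conclusion, but the verification step is organized differently from the paper's. Both arguments begin by noting that a candidate $\structure{F}^{\dagger}$ with at most $k \le |V(\structure{F})|$ states can be written down (or enumerated) in polynomial space. For the verification that $\structure{F}^{\dagger}$ output simulates $\structure{F}$, the paper reduces each condition to an instance of NFA language equivalence/inclusion: it builds the tensor product $\structure{G} = \structure{F} \fprod \structure{F}^{\dagger}$, converts the reachability questions into NFA acceptance conditions, and then invokes the known \pspace\ upper bound for NFA equivalence as a black box. You instead argue directly: track the exact reached-set $\reachedv{\structure{F}}{s}$ (a subset of $V(\structure{F})$, hence polynomial space) while nondeterministically following a single tracing in $\structure{F}^{\dagger}$, search for a violating prefix, and wrap the whole thing in Savitch's theorem to flip the quantifier. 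Your route is more elementary---it does not cite an external \pspace\ result---and in effect re-derives in situ the on-the-fly subset machinery that underlies the NFA-equivalence bound. The paper's route is cleaner to state but leans on a citation.

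One point to tighten: your parenthetical that the exponential bound on a shortest violating string is ``irrelevant to the space bound'' undersells its role. For the co-\textsc{NPSPACE} subroutine to be a well-defined halting procedure, you must explicitly cap the search depth by the number of distinct configurations (at most $2^{|V(\structure{F})|}\cdot(|V(\structure{F}^{\dagger})|+1)$) using a polynomial-bit counter; otherwise non-violating branches never terminate. You clearly know this, but it should be stated as part of the algorithm, not dismissed.
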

\shortvspace{-10pt}
\begin{proof}
\shortenXor{
We show first that representing and searching for a filter takes polynomial
space, and then that only polynomial space is needed to 
ascertain whether a filter output simulates
$\structure{F}=(V, V_0, Y, \tau, C, c)$.} {Two steps: 
polynomial space suffices 
(1) to represent and search for a filter, and (2) to 
ascertain whether a filter output simulates
$\structure{F}$.}
\shortenXor{Since}{For (1), since}  \pfmdec requires we encode filters of 
size $k$, we need to keep track of at most $k^2$ transitions, at most
$|Y|$ labels for each transition, at most $|C|$ colors for each state, and at
most $k$ initial states.  The space needed to enumerate 
output filters is $O(k^2\times |Y|+k\times |C|)$.   

\shortenXor{
To show that it also takes polynomial space to check whether a filter
$\structure{F'}$ output simulates the filter $\structure{F}$ that was provided
as input, we need to check the language inclusion property and then output
consistency.

First, we will show that it takes polynomial space to establish that
$\ILanguage{\structure{F}}\subseteq \ILanguage{\structure{F'}}$ by converting
it to the problem of NFA equivalence, which is in \pspace.
We form 
product graph $\structure{G}=\structure{F}\fprod \structure{F'}$. If there is
no vertex $(v, \placeholder)$ in $\structure{G}$ such that $v\in
V(\structure{F})$, then we claim that $\ILanguage{\structure{F}}\subseteq \ILanguage{\structure{F'}}$
since every string reaching a vertex in $\structure{F}$ also reaches some vertex
in $\structure{F}'$. If there exists some such a vertex $(v,\placeholder)$,
then we must determine whether the strings reaching $(v,\placeholder)$ also reach
some vertex in $\structure{F}'$. We build an NFA $\structure{A}$ from
$\structure{G}$ by treating all states $\lbrace(v,\placeholder)\mid v\in
V(\structure{F})\rbrace$
as accepting states. Next, we create another NFA, $\structure{B}$, from
$\structure{F}'$ by treating every state in $\structure{F}'$ as accepting.
Then we want to show that strings reaching every
$(v,\placeholder)$ are accepted by $\structure{B}$.  Or, in other words,
whether NFAs $\structure{A}$ and $\structure{A}\cap \structure{B}$
are equivalent (where $\cap$ is automata intersection). 
Creating automata $\structure{A}$ and $\structure{A}\cap
\structure{B}$, and showing their equivalence is 
in \pspace.
}{For (2), we must verify both language inclusion  and output
consistency.
To show $\ILanguage{\structure{F}}\subseteq \ILanguage{\structure{F'}}$,
form 
product graph $\structure{G}=\structure{F}\fprod \structure{F'}$. 
If there is
no vertex $(v, \placeholder)$ in $\structure{G}$ such that $v\in
V(\structure{F})$, then $\ILanguage{\structure{F}}\subseteq \ILanguage{\structure{F'}}$
since every string reaching a vertex in $\structure{F}$ also reaches some vertex
in $\structure{F}'$. 
If there exists some such a vertex $(v,\placeholder)$,
then we must determine whether the strings reaching $(v,\placeholder)$ also reach
some vertex in $\structure{F}'$. We build an NFA $\structure{A}$ from
$\structure{G}$ by treating all states $\lbrace(v,\placeholder)\mid v\in
V(\structure{F})\rbrace$
as accepting states. Next, construct a second NFA, $\structure{B}$, from
$\structure{F}'$ by treating every state in $\structure{F}'$ as accepting.
Then we must show that strings reaching every
$(v,\placeholder)$ are accepted by $\structure{B}$, i.e.,
whether $\structure{A}$ and $\structure{A}\cap \structure{B}$
are equivalent (where $\cap$ is automata intersection). 
Creating automata $\structure{A}$ and $\structure{A}\cap
\structure{B}$, and, via Lemma~\ref{lem:nfa_equiv}, showing their equivalence is 
in \pspace.
}

\shortenXor{
Secondly, verifying output consistency also needs only polynomial space.  Begin
by removing the states of the form $(v,\placeholder)$ from $\structure{G}$.
Then, for every state $(v,w)$ in $\structure{G}$ such that $c(v)\not\supseteq c(w)$, 
to output simulate, for every output $o\in c(w)\setminus c(v)$, strings reaching
$(v, w)$ must reach some state $u$ in $\structure{F}$ with $o\in c(u)$. Otherwise, $o$
is not a legal output for some string, and $\structure{F'}$ doesn't output
simulate $\structure{F}$. To see whether $o$ is a legal output, we build an NFA 
$\structure{M}$ from
$\structure{G}$ by treating $(v,w)$ as accepting states, and another NFA
$\structure{N}$ from $\structure{F}$ by treating the states with color $o$ as
accepting states. If $\ALanguage{\structure{M}}\subseteq
\ALanguage{\structure{N}}$, then $o$ is safe. If every $o\in c(w)\setminus c(v)$
is safe, then the output of $\structure{F}'$ is consistent on that of
$\structure{F}$. Otherwise, $(v,w)$ is an evidence of violation for output
consistency. This procedure also takes polynomial amount of space. 
}
{
Verifying output consistency also needs only polynomial space.  
Remove the states of the form $(v,\placeholder)$ from $\structure{G}$,
then, for every state $(v,w)$ in $\structure{G}$ such that $c(v)\not\supseteq c(w)$, 
to output simulate, for every output $o\in c(w)\setminus c(v)$, strings reaching
$(v, w)$ must reach some state $u$ in $\structure{F}$ with $o\in c(u)$.
Otherwise $o$ is not a legal output for some string. To see
whether $o$ is a legal output, we build an NFA $\structure{M}$ from
$\structure{G}$ by treating $(v,w)$ as accepting states, and another NFA 
$\structure{N}$ from $\structure{F}$ by treating the states with color $o$ as
accepting states. If $\ALanguage{\structure{M}}\subseteq
\ALanguage{\structure{N}}$, then $o$ is safe. If every $o\in c(w)\setminus
c(v)$ is safe, then the output of $\structure{F}'$ is consistent on that of
$\structure{F}$. Otherwise, $(v,w)$ is certificate for violation of output
consistency. This procedure takes polynomial space.  }
\shortenXor{ Therefore, \pfmdec is in \pspace.}{} \end{proof}

\begin{lemma}
\label{lm:fm_nph}
\pfmdec is \pspacehard.
\end{lemma}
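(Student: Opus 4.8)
The plan is to reduce from NFA universality (Lemma on NFA universality), which is \pspacecomplete. Given an NFA $\structure{A}=(Q,Q_0,\Sigma,\delta,A)$, I would construct in polynomial time a p-filter $\structure{F}_{\structure{A}}$ together with a bound $k$ so that $\structure{F}_{\structure{A}}$ admits a $k$-state output-simulating minimizer if and only if $\ALanguage{\structure{A}}=\Sigma^{*}$. The intuition is that output simulation forces a candidate minimizer to accept every string in $\ILanguage{\structure{F}_{\structure{A}}}$, so by rigging $\structure{F}_{\structure{A}}$ so that its interaction language is exactly $\Sigma^{*}$ precisely when $\structure{A}$ is universal, we can read off universality from the existence of a small simulator. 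The simplest way to make the target size $k$ meaningful is to pick $k=1$: a one-state nondeterministic filter with a self-loop on every symbol of $\Sigma$ (and any single color, say the color of the initial state of $\structure{F}_{\structure{A}}$) output simulates $\structure{F}_{\structure{A}}$ iff $\ILanguage{\structure{F}_{\structure{A}}}=\Sigma^{*}$ and that one color is consistent on every string.

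Concretely, I would build $\structure{F}_{\structure{A}}$ by taking $\structure{A}$'s transition graph over alphabet $Y=\Sigma$, keeping $Q_0$ as the initial states, and then encoding the accepting structure into \emph{reachability} rather than into colors: delete all states that are not co-reachable to $A$ (i.e., from which no string leads into an accepting state), or, more robustly, add a fresh ``sink-accept'' gadget so that a state is retained exactly when it can still reach an accepting state. After trimming, a string $s$ is in $\ILanguage{\structure{F}_{\structure{A}}}$ iff $s$ is a prefix of some accepted string; to get $\ILanguage{\structure{F}_{\structure{A}}}=\Sigma^{*}\iff \ALanguage{\structure{A}}=\Sigma^{*}$, I would instead make accepting states absorbing (self-loop on all of $\Sigma$) and make every state able to reach an accepting state only along genuinely accepting runs, so that a string crashes exactly when no run of $\structure{A}$ on it has reached (or can still reach) an accepting configuration — which, when $\structure{A}$ is universal, never happens. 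Assign a single color to all states. Then: if $\structure{A}$ is universal, the one-state self-looping filter with that color output simulates $\structure{F}_{\structure{A}}$, so the \pfmdec instance $(\structure{F}_{\structure{A}},1)$ is a \textsc{Yes}; if $\structure{A}$ is not universal, some string crashes on $\structure{F}_{\structure{A}}$ but not on any filter (every filter that output simulates must have $\ILanguage{\structure{F}_{\structure{A}}}\subseteq\ILanguage{\structure{F}^{\dagger}}$, and a single word outside $\ILanguage{\structure{F}_{\structure{A}}}$ does not by itself obstruct that), so I need the \emph{reverse} rigging.

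Because the direction above is the wrong way around, the cleaner reduction is from NFA \emph{universality} via its complement behavior on the output side: let the filter have interaction language all of $\Sigma^{*}$ (so language inclusion is free for any total filter), and encode $\structure{A}$ into the \emph{colors} so that the set of strings carrying a ``bad'' color is exactly $\Sigma^{*}\setminus\ALanguage{\structure{A}}$; then a $1$-state minimizer (which must output a single fixed color on all of $\Sigma^{*}$) exists iff that bad set is empty iff $\structure{A}$ is universal. Make $\structure{F}_{\structure{A}}$ have two colors, $+$ and $-$, with $+$ on accepting states and $-$ elsewhere, add self-loops so the structure is total over $\Sigma$ and accepting states stay accepting, and add a disjoint initial branch that always carries $+$; then $\reachedc{\structure{F}_{\structure{A}}}{s}\ni -$ for all $s$, while $\reachedc{\structure{F}_{\structure{A}}}{s}\ni +$ iff $s\in\ALanguage{\structure{A}}$ — no wait, this makes $+$ the universal color. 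Fixing polarities: put the always-$+$ branch so $+\in\reachedc{\structure{F}_{\structure{A}}}{s}$ for every $s$, and put $-$ only on accepting states so $-\in\reachedc{\structure{F}_{\structure{A}}}{s}\iff s\in\ALanguage{\structure{A}}$; a $1$-state filter must commit to a single output color $\{+\}$ (the only value in every $\reachedc{\structure{F}_{\structure{A}}}{s}$), and $\{+\}\subseteq\reachedc{\structure{F}_{\structure{A}}}{s}$ for all $s$ always holds, so $1$ state always suffices — degenerate. The genuine obstruction, and the main thing to get right, is therefore the \textbf{design of the color/transition gadget so that a small filter is forced to track nontrivially many information states}: I expect the right construction uses DFA union universality (Lemma on DFA union universality) — take DFAs $\structure{A}_1,\dots,\structure{A}_n$, build a filter whose states are (copy index, DFA state) with the $i$-th copy colored by ``$\structure{A}_i$ rejects'', route the initial nondeterministic choice over all $n$ copies, and argue a $k$-state simulator with $k<$ (some threshold derived from $n$) exists iff $\bigcup_i\ALanguage{\structure{A}_i}=\Sigma^{*}$ (so that the ``all reject'' color never surfaces and the copies can be collapsed). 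Verifying that collapse is possible in the universal case, and impossible otherwise, using the tensor-product characterization of output simulation from Lemma~\ref{lm:fm_np}, is the step I expect to be delicate; everything else (polynomial-time constructibility, trimness, totality) is routine.
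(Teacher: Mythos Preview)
Your exploration correctly identifies the right reduction source (NFA universality) and correctly diagnoses why the naive attempts fail: encoding acceptance into the interaction language goes the wrong direction (language inclusion is $\ILanguage{\structure{F}}\subseteq\ILanguage{\structure{F}^\dagger}$, so crashes in $\structure{F}$ never obstruct a small minimizer), and encoding acceptance directly into colors on the NFA states is degenerate because the nondeterministic union of reached colors always contains the ``universal'' color, letting a one-state filter trivially succeed. Your final pivot to DFA union universality is underspecified and suffers from the same defect: with two colors \textsf{accept}/\textsf{reject} spread across $n$ deterministic copies, you get \textsf{accept} in $\reachedc{}{s}$ iff some $\structure{A}_i$ accepts $s$, but you have no guarantee that any string produces \emph{only} \textsf{accept}, so even in the non-universal case a one-state \textsf{reject} filter may still output-simulate. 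You never pin down a threshold $k$ for which the dichotomy is clean.

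The missing idea, which the paper supplies, is a \emph{fresh terminal symbol} $z\notin\Sigma$ that separates a ``tracking'' phase from a ``reveal'' phase. Concretely: keep the NFA's transition structure, color every NFA state green, and add two parallel gadgets---(i) a new initial state $v$ with a $\Sigma$-self-loop and a $z$-edge to a \emph{blue} state $u$, and (ii) $z$-edges from each accepting NFA state to a new \emph{green} state $w$. Now every $s\in\Sigma^{*}$ outputs only green (via $v$ and the NFA states), while $sz$ outputs $\{\text{green},\text{blue}\}$ if $s\in\ALanguage{\structure{A}}$ (reaching both $w$ and $u$) and $\{\text{blue}\}$ only if $s\notin\ALanguage{\structure{A}}$ (reaching only $u$). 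Thus a one-state minimizer must be green (to cover $\Sigma^{*}$), and green works on every $sz$ iff every $s$ is accepted iff $\structure{A}$ is universal; otherwise some $sz$ forces blue while $s$ forces green, so at least two states are needed. This is exactly the two-color obstruction you were groping for, and the $z$-gadget is what makes it non-degenerate.
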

\shortvspace{-10pt}
\begin{proof}
We give a polynomial time reduction from NFA universality (Lemma~\ref{lem:nfa_univ}) to \pfmdec. To
show the accepting language of a given NFA $\structure{A}=(Q, Q_0, \Sigma,
\delta, A)$ is $\Sigma^{*}$, we first create a filter $\structure{F}$
from $\structure{A}$ \shortenXor{as follows:}{in polynomial time as follows:}
\begin{enumerate}
\item Add the states, transitions, initial states of $\structure{A}$ to the
states, transitions, initial states of $\structure{F}$.
\item Add a new initial state $v$ to $\structure{F}$, with a self loop bearing
all labels $\Sigma$ from $\structure{A}$.
\item Add a new vertex $w$ to $\structure{F}$. 
For every state in $\structure{F}$ arising from an 
accepting state in $\structure{A}$, 
add a transition to $w$ under some
new label $z$, where $z \not\in \Sigma$. 
\item Add one more vertex $u$ to $\structure{F}$, and a transition from $v$ to
$u$ under $z$.
\item Color $u$ blue, the all other vertices green.
\end{enumerate}
\shorten{This procedure takes polynomial time.}

Now, the interaction language for this filter is $\Sigma^{*}z$. Further, the
outputs of strings $\ALanguage{\structure{A}}z$ are both green and blue, while
the outputs for
the strings $(\Sigma^{*}\setminus \ALanguage{\structure{A}})z$ are blue only. 

If $\ALanguage{\structure{A}}$ is $\Sigma^{*}$, then the minimal filter for
$\structure{F}$ has only one green state and it has a self loop bearing
$\Sigma\cup\lbrace z\rbrace$. If
$\ALanguage{\structure{A}}$ is not $\Sigma^{*}$, then there there exists some
string $s\not\in \ALanguage{\structure{A}}$ where $s$ only outputs green,
and $sz$
only outputs blue.  There must, therefore, be at least two states (one colored
green, and one colored blue) in its minimizer. As a consequence, if the
minimizer of $\structure{F}$ has only one state, then
$\ALanguage{\structure{A}}$ is $\Sigma^{*}$. Otherwise,
$\ALanguage{\structure{A}}$ is not $\Sigma^{*}$.

Therefore, we get a polynomial time reduction from NFA universality
to \pfmdec. \pfmdec is \pspacehard since NFA universality is
\pspacecomplete.
\end{proof}

\begin{lemma}
\label{lm:fm_dec}
\pfmdec is \pspacecomplete.
\end{lemma}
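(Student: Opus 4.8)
The plan is to combine Lemma~\ref{lm:fm_np} and Lemma~\ref{lm:fm_nph} directly: the former establishes membership in \pspace, and the latter establishes \pspacehard-ness, so together they yield \pspacecomplete-ness. This is the standard two-sided argument, and the only work here is bookkeeping to confirm that the two lemmas really are about the same problem \pfmdec, with the same encoding conventions, so that nothing is lost in stitching them together.

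Concretely, I would write: by Lemma~\ref{lm:fm_np}, \pfmdec is in \pspace. By Lemma~\ref{lm:fm_nph}, \pfmdec is \pspacehard. A problem that is both in \pspace and \pspacehard is, by definition, \pspacecomplete. Hence \pfmdec is \pspacecomplete. That is essentially the entire proof.

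There is no real obstacle, since the substantive content has already been discharged in the preceding two lemmas. The only thing to be mildly careful about is that the hardness reduction in Lemma~\ref{lm:fm_nph} is a polynomial-time (Karp) reduction from a \pspacehard problem, so it certifies \pspacehard-ness under the appropriate notion of reduction; and that the \pspace membership proof in Lemma~\ref{lm:fm_np} covers the full generality of \pfmdec (arbitrary nondeterministic input filter and arbitrary nondeterministic candidate minimizer), which it does. Given all that, the proof is a one-line appeal to the definition of \pspace-completeness.

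\begin{proof}
Immediate from Lemma~\ref{lm:fm_np} and Lemma~\ref{lm:fm_nph}: the former shows \pfmdec is in \pspace, and the latter shows it is \pspacehard. A problem that is both in \pspace and \pspacehard is \pspacecomplete.
\end{proof}
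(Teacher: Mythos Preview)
Your proposal is correct and matches the paper's own proof essentially verbatim: the paper simply writes ``Combine Lemmas~\ref{lm:fm_np} and~\ref{lm:fm_nph}.'' There is nothing to add.
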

\shortvspace{-10pt}
\begin{proof}
Combine Lemmas~\ref{lm:fm_np} and~\ref{lm:fm_nph}.
\end{proof}

\begin{theorem}
\pfm is \pspacehard.
\end{theorem}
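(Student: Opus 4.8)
The plan is to derive the hardness of the optimization problem \pfm directly from the hardness of its decision version \pfmdec, which Lemma~\ref{lm:fm_dec} establishes is \pspacecomplete (and in particular \pspacehard). The standard argument is that an efficient algorithm for the optimization problem yields an efficient algorithm for the decision problem: if we could produce a minimal output-simulating filter $\structure{F}^{\dagger}$ for $\structure{F}$ in polynomial time (or even polynomial space), then we could answer the \pfmdec instance $(\structure{F}, k)$ by computing $\structure{F}^{\dagger}$ and comparing $|V(\structure{F}^{\dagger})|$ against $k$. Since \pfmdec is \pspacehard, so is \pfm.

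Concretely, I would phrase it as a Turing reduction from \pfmdec to \pfm. Given an instance $(\structure{F}, k)$ of \pfmdec, invoke a solver for \pfm on $\structure{F}$ to obtain a minimizer $\structure{F}^{\dagger}$; output \textsc{Yes} iff $|V(\structure{F}^{\dagger})| \le k$. This reduction runs in polynomial time plus one oracle call, and its correctness is immediate from the definitions: a filter with at most $k$ states that output simulates $\structure{F}$ exists if and only if the minimum such filter has at most $k$ states. Hence any algorithm solving \pfm also solves \pfmdec (up to polynomial overhead), so \pfm inherits \pspacehard-ness from Lemma~\ref{lm:fm_nph}/Lemma~\ref{lm:fm_dec}.

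The one subtlety worth stating carefully is the output size of a minimal filter: to claim the reduction is genuinely polynomial-time we should note that a minimal output-simulating filter for $\structure{F}$ never needs more states than $\structure{F}$ itself (indeed $\structure{F}$ output simulates $\structure{F}$, so the minimum is at most $|V(\structure{F})|$), so writing down $\structure{F}^{\dagger}$ and reading off its state count costs only polynomial time in the size of $\structure{F}$. With that observation the reduction is clean. I expect this size-bound remark to be the only place requiring any care; everything else is a one-line invocation of the preceding lemma, so the proof is essentially a corollary.

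\begin{proof}
We reduce \pfmdec to \pfm. Given an instance $(\structure{F}, k)$ of \pfmdec, run an algorithm for \pfm on input $\structure{F}$ to obtain a filter $\structure{F}^{\dagger}$ with the fewest states that output simulates $\structure{F}$; answer \textsc{Yes} if $|V(\structure{F}^{\dagger})| \le k$ and \textsc{No} otherwise. Correctness is immediate: there is a filter with at most $k$ states output simulating $\structure{F}$ if and only if the minimum number of states of any such filter is at most $k$. Moreover, $\structure{F}$ output simulates itself, so $|V(\structure{F}^{\dagger})| \le |V(\structure{F})|$, meaning the reduction does only polynomial work beyond the call to the \pfm solver. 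Since \pfmdec is \pspacehard by Lemma~\ref{lm:fm_dec}, so is \pfm.
\end{proof}
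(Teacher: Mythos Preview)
Your proof is correct and matches the paper's approach: the paper simply states that this is a direct consequence of Lemma~\ref{lm:fm_dec}, and your argument spells out exactly that consequence via the obvious Turing reduction from \pfmdec to \pfm. Your explicit observation that $|V(\structure{F}^{\dagger})| \le |V(\structure{F})|$ (so the oracle output is polynomially bounded) is a nice touch the paper leaves implicit.
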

\shortvspace{-10pt}
\begin{proof}
This is a direct consequence of Lemma~\ref{lm:fm_dec}.
\end{proof}

Having considered the case where both the input and the minimizer may be
nondeterministic, next we show that limiting nondeterminism to only the input
filter (what we dubbed \pfdmdec earlier) still retains its hardness.

\begin{theorem}
\label{lm:fdm_dec_hard}
\pfdmdec is \pspacehard. 
\end{theorem}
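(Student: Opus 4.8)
The plan is to observe that the polynomial-time reduction already constructed in the proof of Lemma~\ref{lm:fm_nph} proves this statement verbatim, once two easy facts about the filters it builds are made explicit; no new construction is required. Recall that that reduction takes an NFA $\structure{A}$ over an alphabet $\Sigma$ and produces, in polynomial time, a filter $\structure{F}$ (in general nondeterministic, since it carries several initial states) such that, on the strings that do not crash: every string in $\Sigma^{*}$ outputs only green; every string $sz$ with $s\in\ALanguage{\structure{A}}$ outputs both green and blue; and every string $sz$ with $s\notin\ALanguage{\structure{A}}$ outputs only blue. I would hand this very $\structure{F}$ to \pfdmdec together with the bound $k=1$.

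The first fact to record is that \emph{any one-state filter is deterministic}: it has a single initial state (nonempty, by trimness), and the determinism condition on the transition function quantifies over triples $v_1,v_2,v_3$ with $v_2\neq v_3$, which is vacuous when there is only one state. Hence, on these instances, ``there is a filter with at most one state that \oss $\structure{F}$'' and ``there is a \emph{deterministic} filter with at most one state that \oss $\structure{F}$'' are literally the same question. The second step is to rerun the two directions of Lemma~\ref{lm:fm_nph} and note what they yield here. If $\ALanguage{\structure{A}}=\Sigma^{*}$, the single green state with a self-loop on $\Sigma\cup\{z\}$ output simulates $\structure{F}$ --- language inclusion is immediate, and output consistency holds because its lone output, green, lies in $\reachedc{\structure{F}}{s}$ for every noncrashing string $s$ --- so $(\structure{F},1)$ is a \textsc{Yes} instance. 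If $\ALanguage{\structure{A}}\neq\Sigma^{*}$, choose $s\notin\ALanguage{\structure{A}}$; then $s$ outputs only green and $sz$ outputs only blue in $\structure{F}$, so every output simulator must (by language inclusion and the nonemptiness of outputs) reach a green state on $s$ and a blue state on $sz$, forcing at least two states, and $(\structure{F},1)$ is a \textsc{No} instance. Combining these, $(\structure{F},1)$ is a \textsc{Yes} instance of \pfdmdec exactly when $\ALanguage{\structure{A}}=\Sigma^{*}$. Since NFA universality is \pspacecomplete, \pfdmdec is \pspacehard, and consequently so is \pfdm.

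There is no serious obstacle; the only point needing care is the \textsc{Yes} direction, where one must confirm that the single-state candidate genuinely output simulates $\structure{F}$ (checking both language inclusion and output consistency) and is genuinely deterministic, rather than merely small --- here all of this is routine, precisely because one green self-looping state accepts a superlanguage of $\ILanguage{\structure{F}}$ and emits the one color common to every string's output. I would additionally remark, in passing, that restricting the filter enumeration in the proof of Lemma~\ref{lm:fm_np} to deterministic candidates only shrinks the search space, so \pfdmdec is in fact \pspacecomplete, although only hardness is claimed in the statement. If a self-contained reduction is preferred over reusing Lemma~\ref{lm:fm_nph}, the identical construction starting from an instance of DFA union universality (likewise \pspacecomplete) works verbatim: take all the given DFAs, make all their initial states the initial states of $\structure{F}$, and attach $v$, $w$, $u$ exactly as before; the resulting filter admits a one-state output simulator exactly when the union of the given DFA languages equals $\Sigma^{*}$.
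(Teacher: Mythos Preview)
Your main argument is correct and takes a cleaner route than the paper. The paper reduces from DFA union universality with a fresh construction: it completes each DFA, takes their union to form $\structure{B}'$, colors accepting states green and non-accepting states red, and appends one extra green state reachable under a fresh symbol $z$; it then argues that a one-state deterministic minimizer exists iff the union is universal. You instead recycle the NFA-universality reduction of Lemma~\ref{lm:fm_nph} verbatim, adding only the observation that every one-state filter is automatically deterministic, so the instance $(\structure{F},1)$ produced there is already a valid \pfdmdec instance. This is more economical---it shows both hardness results are really the same reduction---while the paper's choice of DFA union universality has the minor expository virtue of locating the nondeterminism entirely in the multiple initial states.

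Your aside, however, is wrong and should be dropped. You claim \pfdmdec is in \pspace because ``restricting the filter enumeration in the proof of Lemma~\ref{lm:fm_np} to deterministic candidates only shrinks the search space.'' The paper explicitly leaves membership in \pspace open (see the subsection following the theorem) and explains why this argument fails. The enumeration in Lemma~\ref{lm:fm_np} costs space $O(k^2|Y|+k|C|)$, which is polynomial for \pfmdec only because one may assume $k\le|V(\structure{F})|$: the input filter itself witnesses \textsc{Yes} whenever $k\ge|V(\structure{F})|$. For \pfdmdec that shortcut is unavailable---a nondeterministic $\structure{F}$ need not be output simulated by any small \emph{deterministic} filter---and Construction~\ref{const:prime} with Lemma~\ref{lem:size_is_big} shows the smallest deterministic simulator can exceed every polynomial in $|V(\structure{F})|$. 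Hence, with $k$ encoded in binary, the relevant values of $k$ may be super-polynomial in the input size, and merely storing a single candidate already breaks the polynomial-space bound. Restricting to deterministic candidates shrinks the set of filters \emph{of a given size}, but does not bound the sizes one must consider.
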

\shortvspace{-10pt}
\begin{proof}
We show the \pfdmdec is \pspacehard by reducing the DFA union universality
problem (Lemma~\ref{lem:Udfa_univ}) to \pfdmdec. Given a set of DFAs, $\structure{A}_1,
\structure{A}_2,\dots, \structure{A}_n$, let the union of their alphabet be
$\Sigma$. The DFA union universality problem is to check
$\ALanguage{\structure{A}_1}\cup \ALanguage{\structure{A}_2}\cup \dots\cup
\ALanguage{\structure{A}_n}=\Sigma^{*}$. 
For each DFA $\structure{A}_i$, we 
first, we construct a DFA $\structure{A}'_i$, such that
$\ALanguage{\structure{A}'_i}=\ALanguage{\structure{A}_i}$ and
$\ILanguage{\structure{A}'_i}=\Sigma^{*}$:
\begin{enumerate}
\item Initialize $\structure{A}'_i$ as a copy of $\structure{A}_i$.
\item If $\ILanguage{\structure{A}'_i}\neq \Sigma^{*}$, then add a trap state
$v'$ with a self loop bearing all labels in $\Sigma$ for each DFA
$\structure{A}'_i$. For each state $w'$ in $\structure{A}'_i$ and every
outgoing event $y\in \Sigma$, if $y$ crashes when traced from $w'$, build a
transition from $w'$ in 
 $\structure{A}'_i$ to the trap state $v'$ under $y$. 
\item Make all states corresponding to accepting states in $\structure{A}_i$ 
 the accepting states for $\structure{A}'_i$.
\end{enumerate} 
Next, we build an NFA $\structure{B}'$ as the union of all these
$\structure{A}'_i$'s, 
so as to have $\ALanguage{\structure{A}_1}\cup
\ALanguage{\structure{A}_2}\cup \dots\cup
\ALanguage{\structure{A}_n}=\ALanguage{\structure{B}'}$. 
Additionally, no strings in $\Sigma^{*}$ crash on $\structure{B}'$. The task,
then, is to check  whether $\ALanguage{\structure{B}'}=\Sigma^{*}$ holds or not.  

To do so, create a filter $\structure{F}$ from $\structure{B'}$ as follows:
\begin{enumerate}
\item Add the initial states, states, transitions of $\structure{B}'$ to \shorten{those
of }$\structure{F}$.
\item Color the copies of the accepting states in $\structure{B'}$ green, and
the copies of the non-accepting states red.
\item Add one more state, and color it green. Make this state the 
destination reached from 
one goal state under a fresh symbol $z$ (i.e., where
$z$ is not a symbol from $\Sigma$). 
\end{enumerate}
By adding the new symbol $z$, we known that there is some string ending with
$z$ which outputs only green in $\structure{F}$.

Supposing $\structure{H}$ is a deterministic minimizer of $\structure{F}$,
there are two cases. First, if $\structure{H}$ is a one-state filter, then it
must be green because there are some strings that must output only green.
\shortenXor{But then, since the
one-state filter output simulates $\structure{F}$,}{Then} every string in $\Sigma^{*}$
must output at least green in $\structure{F}$.  Hence, every string in
$\Sigma^{*}$ must reach the accepting states in $\structure{B}'$, and
we conclude $\ALanguage{\structure{B}'}=\Sigma^{*}$. 
Alternatively, if $\structure{H}$ has more
than one state, then there is at least one green state and one red state.
(Otherwise, $\structure{H}$ is not minimal.) \shortenXor{As a consequence,}{Hence,}
there must be some string in $\Sigma^{*}$ that can only output red. Those
strings with only red output never reach the accepting states in
$\structure{B}'$. So, consequently, $\ALanguage{\structure{B}'}\neq \Sigma^{*}$. 

The procedure to solve \pfdmdec involves checking whether there is a
one-state minimizer for $\structure{F}$. If there is such a minimizer, then
the accepting language of
the union of all DFAs is $\Sigma^{*}$.  Otherwise, it is not. 
Having given a polynomial time procedure to reduce the DFA union universality
problem to \pfdmdec, which is itself known to be \pspacecomplete, shows that
\pfdmdec is \pspacehard.  
\end{proof}

\shortenXor{
Since \pfdm can be no easier than its decision version, we
can claim that \pfdm is \pspacehard in terms of space complexity.
}
{
Since \pfdm can be no easier than its decision version, 
\pfdm is \pspacehard in terms of space complexity.
\shortvspace{-3pt}
}
\begin{theorem}
\pfdm is \pspacehard.
\end{theorem}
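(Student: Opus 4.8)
The plan is to treat this theorem as an immediate corollary of Theorem~\ref{lm:fdm_dec_hard}, which already establishes that the decision version \pfdmdec is \pspacehard. The governing principle is that the search (optimization) problem \pfdm can be no easier than the decision problem \pfdmdec that it refines.

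First I would make precise what \pspacehardness means for \pfdm, whose output is a filter rather than a yes/no answer: namely that every language in \pspace admits a polynomial-time (Turing) reduction to \pfdm. To see this, I would exhibit a polynomial-time reduction from \pfdmdec to \pfdm. Given an instance $(\structure{F}, k)$ of \pfdmdec, call a subroutine solving \pfdm on input $\structure{F}$ to obtain a deterministic filter $\structure{H}$ with the fewest states among all deterministic filters that output simulate $\structure{F}$; then answer \textsc{Yes} iff $|V(\structure{H})| \le k$. Correctness is immediate from the definitions: a deterministic minimizer of $\structure{F}$ with at most $k$ states exists if and only if the minimum such count is at most $k$, which is exactly what the returned $\structure{H}$ certifies. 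Composing this with the reduction of an arbitrary \pspacecomplete language $L$ to \pfdmdec given in Theorem~\ref{lm:fdm_dec_hard}, and using the closure of \pspace under polynomial-time reductions, yields the chain witnessing that \pfdm is \pspacehard.

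There is essentially no obstacle in this final step: the substantive work — the reduction from DFA union universality, and the case analysis on one-state versus multi-state deterministic minimizers — was carried out in the proof of Theorem~\ref{lm:fdm_dec_hard}. What remains is only the routine observation that a search problem dominates its associated decision problem, so the conclusion follows directly from Lemma~\ref{lm:fdm_dec_hard} (equivalently, Theorem~\ref{lm:fdm_dec_hard}).
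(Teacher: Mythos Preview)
Your proposal is correct and takes essentially the same approach as the paper: the paper's entire argument is the single sentence ``\pfdm can be no easier than its decision version,'' invoking Theorem~\ref{lm:fdm_dec_hard}, and you have simply unpacked that sentence into an explicit Turing reduction from \pfdmdec to \pfdm.
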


\subsection{Is \pfdmdec\xspace \pspacecomplete?}

It seems natural to suppose that \pfdmdec is simpler than \pfmdec and should
also be in \pspace since the problem is narrower, focusing on more constrained
(deterministic) minimizers.  However, from the perspective of space
consumption, this needn't be the case. 

To help elucidate, it's useful to introduce some lightweight notation
for the problems showing their inputs and outputs explicitly. 
We denote a minimization problem  that converts something of type~A to a
corresponding minimal instance of type~B as A\tom B.
We will compare and contrast filters with automata, and write 
deterministic and nondeterministic instances as $\textbf{Det}$ and $\textbf{NDet}$ respectively.
\begin{figure}
	\begin{center}
		\includegraphics[width=1\columnwidth]{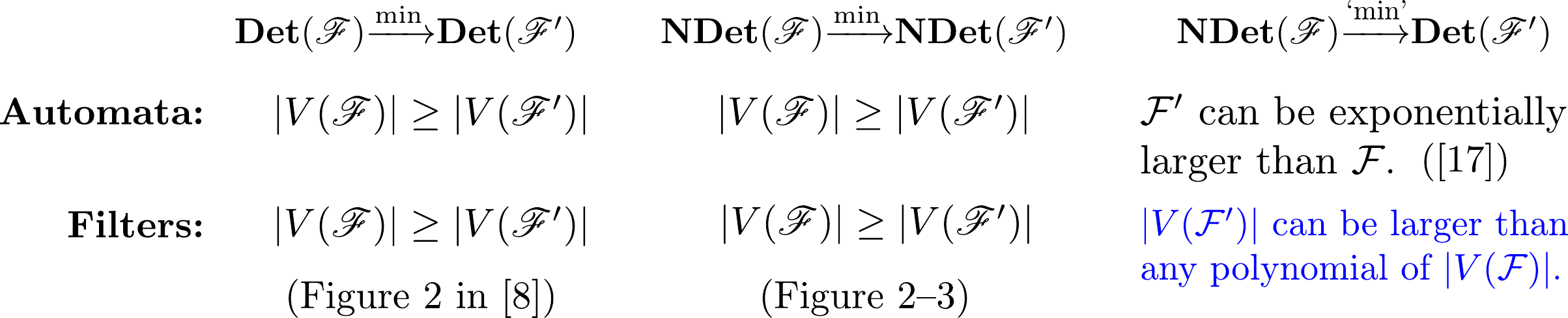}
	\end{center}
\vspace*{-6pt}
	\caption{
    \shortenXor{}{\small}
    An overview about the size of the minimizer for both automata
minimization and filter minimization problems.%
\label{fig:size}}
\shortvspace{-16pt}
\end{figure}

\fig~\ref{fig:size} gives an overview about how the size may change during the
process of filter minimization and automata minimization.  
From results known in the literature (and the previous figures in this paper), we know that
for $\textbf{Det}(\mathscr{F})\tom \textbf{Det}(\mathscr{F'})$ and
$\textbf{NDet}(\mathscr{F})\tom \textbf{NDet}(\mathscr{F'})$ in both filter
minimization and automata minimization, the resulting minimal object is
always smaller than (or equal to) the size of the object provided as input.
But this need not be true when turning a $\textbf{NDet}(\mathscr{F})$ element
into a $\textbf{Det}(\mathscr{F'})$ one, as the minimizer is constrained to be deterministic
and it can be larger than the nondeterminsitic input. In particular, for
automata minimization, the `minimizer' (DFA) can be exponentially larger than
the input automata (NFA)~\cite{meyer1971economy}. For filter minimization, take
the problem shown in \fig~\ref{fig:mobi}, and exchange the roles of the two
graphs: the \num{14}-state deterministic minimizer of \fig~\ref{fig:det_input}
is also a deterministic minimizer for the \num{13}-state nondeterministic filter
shown in \fig~\ref{fig:nd_output}. There, the deterministic minimizer has one
more state than its nondeterministic input filter. 

How big can the difference actually be? We give a construction for a family
of filters demonstrating that the size of the deterministic minimizer may grow
so that its size is beyond any polynomial in the input size (highlighted as blue
in \fig~\ref{fig:size}). 
First, we make a nondeterministic input filter, then we follow that by giving
its deterministic minimizer.

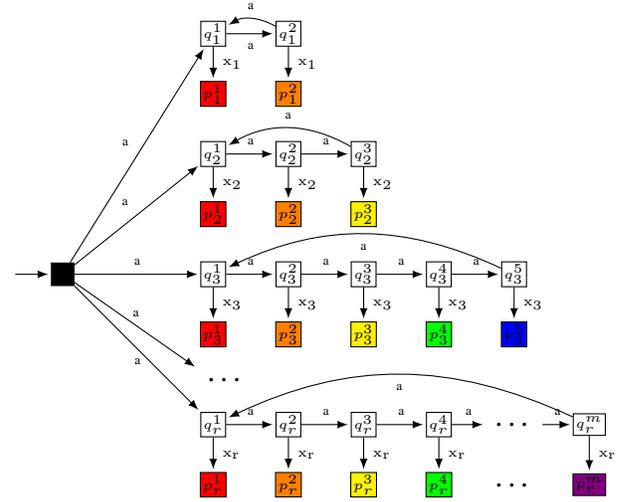
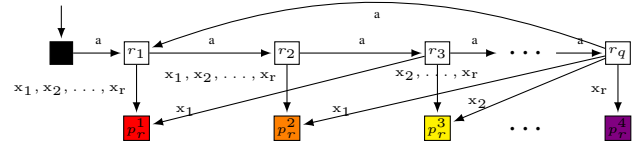
\begin{figure}

\begin{subfigure}[b]{\linewidth}
\begin{tikzpicture}[shorten >=1pt,node distance=1cm, on grid,auto, font=\tiny, initial text = ] 
\tikzset{every state/.style={minimum size=0pt, inner sep=1pt, rectangle}}
   \node[state, initial, fill=black, minimum size =0.3 cm] (q0)   {}; 
   \node[state] (q11) [above right=3.2cm and 2cm of q0] {$q^1_1$}; 
   \node[state] (q12) [right=of q11] {$q^2_1$}; 
   \node[state, fill=red] (p11) [below=0.8cm of q11] {$p^1_1$}; 
   \node[state, fill=orange] (p12) [right=of p11] {$p^2_1$}; 

   \node[state] (q21) [below=0.8cm of p11] {$q^1_2$}; 
   \node[state] (q22) [right=of q21] {$q^2_2$}; 
   \node[state] (q23) [right=of q22] {$q^3_2$}; 
   \node[state, fill=red] (p21) [below=0.8cm of q21] {$p^1_2$}; 
   \node[state, fill=orange] (p22) [right=of p21] {$p^2_2$}; 
   \node[state, fill=yellow] (p23) [right=of p22] {$p^3_2$}; 

   \node[state] (q31) [below=0.8cm of p21] {$q^1_3$}; 
   \node[state] (q32) [right=of q31] {$q^2_3$}; 
   \node[state] (q33) [right=of q32] {$q^3_3$}; 
   \node[state] (q34) [right=of q33] {$q^4_3$}; 
   \node[state] (q35) [right=of q34] {$q^{5}_3$}; 
   \node[state, fill=red] (p31) [below=0.8cm of q31] {$p^1_3$}; 
   \node[state, fill=orange] (p32) [right=of p31] {$p^2_3$}; 
   \node[state, fill=yellow] (p33) [right=of p32] {$p^3_3$}; 
   \node[state, fill=green] (p34) [right=of p33] {$p^4_3$}; 
   \node[state, fill=blue] (p35) [right=of p34] {$p^{5}_3$}; 

   \node[draw=none] (p41) [below=0.6cm of p31] {\qquad \large \dots }; 

   \node[state] (q51) [below=1.2 cm of p31] {$q^1_r$}; 
   \node[state] (q52) [right=of q51] {$q^2_r$}; 
   \node[state] (q53) [right=of q52] {$q^3_r$}; 
   \node[state] (q54) [right=of q53] {$q^4_r$}; 
   \node[draw=none] (q55) [right=of q54] {\large \dots}; 
   \node[state] (q56) [right=of q55] {$q^m_r$}; 
   \node[state, fill=red] (p51) [below=0.8cm of q51] {$p^1_r$}; 
   \node[state, fill=orange] (p52) [right=of p51] {$p^2_r$}; 
   \node[state, fill=yellow] (p53) [right=of p52] {$p^3_r$}; 
   \node[state, fill=green] (p54) [right=of p53] {$p^4_r$}; 
   \node[draw=none] (p55) [right=of p54] {\large \dots}; 
   \node[state, fill=violet] (p56) [right=of p55] {$p^m_r$}; 

    \path[->] 
    (q0) edge node {a} (q11)
    (q11) edge node [swap] {a} (q12)
    (q12) edge[bend right] node[above] {a} (q11)

    (q11) edge node [right] {$\Sx_1$} (p11)
    (q12) edge node [right] {$\Sx_1$} (p12)

    (q0) edge node {a} (q21)
    (q21) edge node {a} (q22)
    (q22) edge node {a} (q23)
    (q23) edge[bend right] node[above] {a} (q21)

    (q21) edge node [right] {$\Sx_2$} (p21)
    (q22) edge node [right] {$\Sx_2$} (p22)
    (q23) edge node [right] {$\Sx_2$} (p23)

    (q0) edge node {a}  (q31)
    (q31) edge node {a} (q32)
    (q32) edge node {a} (q33)
    (q33) edge node {a} (q34)
    (q34) edge node {a} (q35)
    (q35) edge[bend right=25] node[below] {a} (q31)

    (q31) edge node [right] {$\Sx_3$} (p31)
    (q32) edge node [right] {$\Sx_3$} (p32)
    (q33) edge node [right] {$\Sx_3$} (p33)
    (q34) edge node [right] {$\Sx_3$} (p34)
    (q35) edge node [right] {$\Sx_3$} (p35)

    (q0) edge node[below] {a}  (q51)
    (q51) edge node {a} (q52)
    (q52) edge node {a} (q53)
    (q53) edge node {a} (q54)
    (q54) edge node {a} (q55)
    (q55) edge node {a} (q56)
    (q56) edge[bend right=25] node[below] {a} (q51)

    (q51) edge node [right] {$\Sx_r$} (p51)
    (q52) edge node [right] {$\Sx_r$} (p52)
    (q53) edge node [right] {$\Sx_r$} (p53)
    (q54) edge node [right] {$\Sx_r$} (p54)
    (q56) edge node [right] {$\Sx_r$} (p56)

    (q0) edge node[above] {a}  (p41);
    \end{tikzpicture}
    \caption{
    \shortenXor{}{\small}
    A nondeterministic filter with $n$ rows, where the number of white
states at the $i^{\rm th}$ row is the $i^{\rm th}$ prime number. 
\label{fig:nd_prime}}
\end{subfigure}

\begin{subfigure}[b]{\linewidth}
\begin{tikzpicture}[shorten >=1pt,node distance=1cm, on grid,auto, font=\tiny, initial text = ] 
\tikzset{every state/.style={minimum size=0.3 cm, inner sep=1pt, rectangle}}
   \node[state, initial above, fill=black, minimum size =0.3 cm] (r0)   {}; 
   \node[state] (r1) [right=1cm of r0] {$r_1$}; 
   \node[state] (r2) [right=2cm of r1] {$r_2$}; 
   \node[state] (r3) [right=2cm of r2] {$r_3$}; 
   \node[draw=none] (r4) [right=1.2cm of r3] {\large \dots}; 
   \node[state] (r5) [right=1.2cm of r4] {$r_{q}$}; 

   \node[state, fill=red] (q1) [below=of r1] {$p^1_r$}; 
   \node[state, fill=orange] (q2) [below=of r2] {$p^2_r$}; 
   \node[state, fill=yellow] (q3) [below=of r3] {$p^3_r$}; 
   \node[draw=none] (q4) [below=of r4] {\large \dots};
   \node[state, fill=violet] (q4) [below=of r5] {$p^4_r$};

    \path[->] 
    (r0) edge node {a} (r1)
    (r1) edge node {a} (r2)
    (r1) edge node[left] {$\Sx_1,\Sx_2,\dots, \Sx_r$} (q1)
    (r2) edge node {a} (r3)
    (r2) edge node[left, pos=0.24] {$\Sx_1,\Sx_2,\dots, \Sx_r$} (q2)
    (r3) edge node {a} (r4)
    (r3) edge node[left, pos=0.8] {$\Sx_1$} (q1)
    (r3) edge node[anchor=center, pos=0.2] {$\Sx_2,\dots, \Sx_r$} (q3)
    (r4) edge node {a} (r5)
    (r5) edge node[left, pos=0.8] {$\Sx_1$} (q2)
    (r5) edge node[left, pos=0.7] {$\Sx_2$} (q3)
    (r5) edge node[left] {$\Sx_r$} (q4)
    (r5) edge[bend right=20] node[below] {a} (r1);
    \end{tikzpicture}
    \caption{A minimal deterministic filter for (a), where the total number of
white states is the product of first $n$ prime numbers.\label{fig:min_det}}
\end{subfigure}
    \caption{
    \shortenXor{}{\small}
    An example to show that the number of states in the deterministic
minimizer is larger than polynomial size of the nondeterministic input
filter.\label{fig:prime}}
\shortvspace{-18pt}
\end{figure}

\begin{construction} 
\label{const:prime}
Fix some natural number $r$, and construct the nondeterministic input filter with
$r$ rows depicted in \fig~\ref{fig:nd_prime}. 
Create a cycle of white states under `\Sa' where the number of white states at
row $i\in \PositiveNaturals$ is the cycle of length $p_i$, the $i^{\rm th}$
prime number.  
For example, the number of white states in rows $1$, $2$, $3$ are $2$, $3$, $5$,
respectively.  Create a black initial state that connects, via `\Sa', to one state at
each of these $r$ rows. At each row, starting from the state connected with the
initial one, add a transition to a new child state. 
Color the child with a color from the color list $[o_1,
o_2, \dots, o_{p_r}]$,
that excludes both black and white. Each child state is colored as the first
one that is not chosen in the row. 

An equivalent deterministic filter, shown in \fig~\ref{fig:min_det}, is
produced via the power set construction. Notice that all states in the
nondeterministic filter of \fig~\ref{fig:nd_prime} reached by
a common string share the same color, so there are no choices for each
state in the deterministic filter: 
every state must be colored to correspond.
(Part of the next lemma will show this to be
the minimizer.)

\end{construction}

\begin{lemma}
\label{lem:size_is_big}
The deterministic minimizer of a nondeterministic input filter can exceed any
polynomial of the input size.
\end{lemma}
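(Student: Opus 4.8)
The construction being already given, the plan is twofold: (a)~observe that the nondeterministic input filter $\structure{F}_r$ of Construction~\ref{const:prime} has size polynomial in $r$; and (b)~prove that \emph{every} deterministic filter that output simulates it has at least $P_r \DefEq \prod_{i=1}^{r} p_i$ states. Since $P_r$ outgrows every polynomial in $r$, the two facts together yield the claimed super-polynomial gap. For (a): row $i$ contributes $p_i$ white cycle-states and $p_i$ colored children, so $|V(\structure{F}_r)| = 1 + 2\sum_{i=1}^{r} p_i$, the alphabet has $r+1$ symbols, and the color set has $p_r+2$ colors; with the elementary estimate $p_i = O(i\log i)$, the whole description is of size $O(r^2\log r)$, hence polynomial in $r$.

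Next I would record the behavior of $\structure{F}_r$ on the strings that matter. For every $k\ge 1$ the string $\Sa^{k}$ does not crash, and in row $i$ it reaches exactly one white state, the one at position $k \bmod p_i$ along that row's $p_i$-cycle; moreover, for each $i$, $\Sa^{k}\Sx_i$ reaches exactly one state — the colored child of that white state — because $\Sx_i$ labels transitions only in row $i$. Since the coloring rule gives the $p_i$ children of row $i$ pairwise-distinct colors, $\reachedc{\structure{F}_r}{\Sa^{k}\Sx_i}$ is a singleton determined solely by $k \bmod p_i$, so $k\not\equiv k' \pmod{p_i}$ forces $\reachedc{\structure{F}_r}{\Sa^{k}\Sx_i}\neq\reachedc{\structure{F}_r}{\Sa^{k'}\Sx_i}$.

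The heart of the argument is the lower bound. Let $\structure{F}'$ be any deterministic filter that \oss $\structure{F}_r$, and let $v_k$ be the unique state it reaches on $\Sa^{k}$ (unique by determinism; it exists because $\Sa^{k}\in\ILanguage{\structure{F}_r}\subseteq\ILanguage{\structure{F}'}$). Suppose $v_k = v_{k'}$ for some $1\le k, k'\le P_r$. Determinism then forces $\Sa^{k}\Sx_i$ and $\Sa^{k'}\Sx_i$ to reach the same single state of $\structure{F}'$, so $\reachedc{\structure{F}'}{\Sa^{k}\Sx_i}=\reachedc{\structure{F}'}{\Sa^{k'}\Sx_i}$. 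Both strings lie in $\ILanguage{\structure{F}_r}$, so output consistency gives $\reachedc{\structure{F}'}{\Sa^{k}\Sx_i}\subseteq\reachedc{\structure{F}_r}{\Sa^{k}\Sx_i}$ and likewise for $k'$; since the reached vertex of $\structure{F}'$ has a nonempty color set, each inclusion into a singleton is an equality. Hence $\reachedc{\structure{F}_r}{\Sa^{k}\Sx_i}=\reachedc{\structure{F}_r}{\Sa^{k'}\Sx_i}$ for every $i$, which by the previous step means $k\equiv k'\pmod{p_i}$ for all $i$, i.e.\ $k\equiv k'\pmod{P_r}$ by the Chinese Remainder Theorem, i.e.\ $k=k'$. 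So $v_1,\dots,v_{P_r}$ are pairwise distinct and $|V(\structure{F}')|\ge P_r$.

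To conclude, since $P_r\ge 2^{r}$ (indeed $\log P_r = \Theta(r\log r)$ by the prime number theorem) while $|V(\structure{F}_r)| = O(r^2\log r)$, the deterministic minimizer of $\structure{F}_r$ exceeds every polynomial in the input size; I would also remark that the filter of Figure~\ref{fig:min_det} — a single $\Sa$-cycle of length $\mathrm{lcm}(p_1,\dots,p_r)=P_r$ together with $O(\sum_i p_i)$ colored states colored as in Construction~\ref{const:prime} — does output simulate $\structure{F}_r$ with $O(P_r)$ states, so the bound is essentially tight and this is the minimizer. The main obstacle is keeping the ``$v_k$ must remember every residue $k\bmod p_i$'' step airtight under the general p-filter semantics: determinism constrains only transitions (colors may be sets), so one must invoke nonemptiness of color sets to upgrade the output-consistency inclusion to an equality, and one must verify that $\Sx_i$ truly isolates row $i$ so that $\Sa^{k}\Sx_i$ has a singleton reached-color set in the first place. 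The prime-counting estimate and the CRT step are routine.
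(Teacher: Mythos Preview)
Your proof is correct and in several respects tighter than the paper's. The paper proceeds differently: it asserts that the specific deterministic filter of Figure~\ref{fig:min_det} is the minimizer, arguing informally that no two of its states can be merged (different outputs, or disagreement on some common extension owing to coprime cycle lengths), and then computes its size as $1+P(r)+p_r$. Your argument instead proves a direct lower bound of $P_r$ on \emph{every} deterministic output-simulating filter, via the Chinese Remainder Theorem and the singleton color set at $\Sa^{k}\Sx_i$, without ever invoking a notion of mergeability or compatibility. This is both more elementary and more robust: it does not depend on the reader accepting that ``no merge is possible implies minimal'' holds for the general output-simulation relation (which, strictly, requires the compatibility theory developed elsewhere). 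The paper's route buys the exact size of the minimizer and exhibits it concretely; yours buys a clean, self-contained lower bound and only needs Figure~\ref{fig:min_det} as a witness for the matching upper bound. The asymptotic computations ($\sum_{i\le r} p_i = \Theta(r^2\log r)$ and $\log P_r = \Theta(r\log r)$) are the same in both.
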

\shortvspace{-10pt}
\begin{proof}
First we argue that the deterministic form of the nondeterministic input filter
from Construction~\ref{const:prime} is a deterministic minimizer, and then show that
the gap between the size of the nondeterministic input filter and its
deterministic minimizer is larger than any polynomial of the input size.

The deterministic filter shown in \fig~\ref{fig:min_det} is already a minimal
one for the filter depicted in \fig~\ref{fig:nd_prime}. 
The $n$ colors must be included as they are each produced by some string; the
white vertices could only be merged if there was a common divisor in the cycle
lengths, but the cycle lengths are all distinct primes.  Hence, no pair of
states in \fig~\ref{fig:min_det} can be merged since they either have
different outputs or disagree on the outputs of their common extensions. 

Let $n$ be the total number of states in this nondeterministic input filter.
Then we have $n=2\cdot S(r)+1$, where $S(r)=\sum_{i=1}^r p_i$ is summation of
the first $r$ prime numbers, and Bach and Shallit\,\cite{bach1996algorithmic} have shown
that $S(r)\sim\frac{1}{2}r^2\ln r$ holds asymptotically. 
When $1<r$, $n = r^2\ln r+1 < r^3$.  

Let $z$ represent the total number of states in the deterministic filter. Then
we have $z=1+P(r)+p_r$, where $P(r)=\prod_{i=1}^{r} p_i$ is the primorial,
i.e., the product of the first $r$ prime numbers. According to the prime
number theorem and the first Chebyshev function, we have that $P(r)\sim
e^{(1+o(1))r\log r}$ and $p_r\sim r\log r$ holds asymptotically~\cite{jameson2003prime}. Hence,
\shortenXor{
\begin{equation*}
\begin{aligned}
z&=1+P(r)+p_r\\
&=1+e^{(1+o(1))r\log r}+r\log r >e^r \quad\textrm{for large }r.
\end{aligned}
\end{equation*}}{

\shortvspace{-10pt}
{
\begin{equation*}
\begin{aligned}
\qquad z&=1+P(r)+p_r\\
&=1+e^{(1+o(1))r\log r}+r\log r >e^r \quad\textrm{for large }r.
\end{aligned}
\end{equation*}}
\shortvspace{-5pt}
}

Since $r>\sqrt[3]{n}$, we have $z>e^{\sqrt[3]{n}}$. So we write this lower bound of
$z$ as $f(n)=e^{\sqrt[3]{n}}=\sum_{m=0}^{\infty} \frac{n^{\frac{m}{3}}}{m!}$
(Taylor series). 
\smallskip

Now consider any polynomial of $n$ of degree $k$ and write it as $g(n,
k)=\sum_{m=0}^k \alpha_m n^m$.  Let $c=\max\{\alpha_0,
\alpha_1,\dots, \alpha_k\}$. If $n>c\cdot(k+1)$, then we have for all $i\leq
k$, the coefficients have $\alpha_i n^i<c\cdot n^k$, and the sum
$\sum_{i=0}^{i=k} \alpha_i n^i<n^{k+1}$. 

To bring the two bounds in relation to one another: when $n>(3k+6)!$, then
$f(n)>\frac{n^{\frac{3k+6}{3}}}{(3k+6)!}>\frac{n^{\frac{3k+6}{3}}}{n}=n^{k+1}$,
Hence, $f(n)>n^{k+1}$ if $n>(3k+6)!$.  Thus for  $n>\max\{c\cdot (k+1),\,
(3k+6)!\}$, we have that $z>f(n)>n^{k+1}>g(n,k)$.  This is true for any $k$, so
the size of the deterministic minimizer, $z$, is larger than any polynomial of
$n$.
\shorten{Therefore, the number of states in the deterministic minimizer can exceed any
polynomial of the input size.}\end{proof}

One implication of the preceding example is that:

\begin{lemma}
\label{lm:beyondp}
\pfdm is not in \p.
\end{lemma}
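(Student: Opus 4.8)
The plan is to derive this immediately from Lemma~\ref{lem:size_is_big} via an output-size argument, with no further reduction required. The statement ``\pfdm is not in \p'' should be read as: there is no algorithm that, given a nondeterministic input filter, runs in time polynomial in the input size and outputs a deterministic minimizer of it. The key observation is that an algorithm running in time $t(n)$ can write down at most $O(t(n))$ symbols of output, so if the required output is super-polynomially large on some family of inputs, no polynomial-time algorithm can produce it.

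Concretely, I would proceed as follows. First, instantiate Construction~\ref{const:prime} for each $r\in\PositiveNaturals$ to obtain a nondeterministic input filter $\structure{F}_r$; by the counting in Lemma~\ref{lem:size_is_big}, its total encoding size $n$ satisfies $n = 2\cdot S(r)+1 < r^3$ for $r>1$ (and the labels and colors contribute only $\mathrm{poly}(r)$ additional bits), so the input is genuinely of size polynomial in $r$. Second, invoke the already-established part of Lemma~\ref{lem:size_is_big} which shows that the deterministic filter of Figure~\ref{fig:min_det} is a deterministic minimizer of $\structure{F}_r$, and that every deterministic minimizer therefore has at least $z = 1 + P(r) + p_r$ states, with $z > e^{\sqrt[3]{n}}$ exceeding any fixed polynomial in $n$ once $r$ (hence $n$) is large enough. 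Third, conclude: any algorithm solving \pfdm must emit a deterministic minimizer, whose representation requires $\Omega(z)$ symbols, so on the family $\{\structure{F}_r\}_{r}$ its running time is $\Omega(z)$, which is not $O(n^k)$ for any $k$. Hence no polynomial-time algorithm solves \pfdm.

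The only real subtlety, and the point I would be careful to state precisely rather than an actual obstacle, is the distinction between ``$\in \p$'' for a decision problem and for a search/function problem: here the claim is about the function problem \pfdm, and the proof rests entirely on the output-length lower bound from Lemma~\ref{lem:size_is_big}, not on any conditional hardness assumption. I would also make explicit that the lower bound on $z$ is measured against the true (polynomial) input size $n$, which is exactly what the bounds $n < r^3$ and $z > e^{\sqrt[3]{n}}$ already guarantee, so the exponential separation is between the output size and the genuine input size.
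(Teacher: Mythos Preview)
Your proposal is correct and follows exactly the paper's approach: invoke Lemma~\ref{lem:size_is_big} and argue that since the deterministic minimizer can be super-polynomially larger than the input, no polynomial-time algorithm can even write it down. The paper states this more tersely (essentially just ``the output is too big to produce in polynomial time''), while you spell out the output-size bound and the function-vs-decision distinction, but the underlying argument is identical.
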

\shortvspace{-10pt}
\begin{proof}
Since the size of the minimizer can be larger than any polynomial of the input
size, it takes more than polynomial time to output the minimizer. Therefore,
\pfdm $\not\in$ \p.
\end{proof}

Then, considering time complexity further, we can conclude that \pfdm is strictly \nphard.
\begin{theorem}
\pfdm is \nphard, but not in \p. 
\end{theorem}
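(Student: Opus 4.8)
The plan is to assemble the two halves of the statement from facts already in hand; no new construction is needed. For the ``\nphard'' half, I would start from Theorem~\ref{lm:fdm_dec_hard}, which shows \pfdmdec is \pspacehard, together with the observation already made above that \pfdm is no easier than its decision version \pfdmdec --- any deterministic minimizer of $\structure{F}$ immediately settles whether a $k$-state deterministic output simulator exists, simply by comparing its size with $k$. Since the textbook inclusion holds that every \np language lies in \pspace, each such language many-one reduces in polynomial time to any \pspacehard problem; composing that reduction with the (trivial) reduction from \pfdmdec to \pfdm shows \pfdm is \nphard. This step is purely transitivity of polynomial-time reductions.

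For the ``not in \p'' half, I would invoke Lemma~\ref{lm:beyondp} directly. Its content, resting on Construction~\ref{const:prime} and Lemma~\ref{lem:size_is_big}, is that the deterministic minimizer of the primorial filter family has size $1 + P(r) + p_r$, which exceeds every polynomial in the input size $n = 2S(r) + 1$. Any algorithm solving \pfdm must at a minimum write out its answer, so on this family it cannot terminate in time polynomial in $n$, and hence \pfdm $\not\in$ \p. I would stress in the write-up that this is an \emph{unconditional} separation, obtained from output size rather than from a hypothesis like \p $\neq$ \np; indeed, because \pfdm is simultaneously \pspacehard, it cannot be \npcomplete unless \np $=$ \pspace, so ``\nphard but not in \p'' is the sharpest unconditional claim available here.

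I do not expect a genuine obstacle in this particular theorem: the substantive work already lives in Theorem~\ref{lm:fdm_dec_hard} (the reduction from DFA union universality) and in Lemma~\ref{lem:size_is_big} (the primorial-versus-polynomial asymptotic estimate). The one point requiring care is making precise what ``\nphard'' and ``not in \p'' mean for a search/function problem such as \pfdm: I would pin this down as saying that every \np decision problem polynomial-time many-one reduces to \pfdmdec and that \pfdm is at least as hard (in time) as \pfdmdec, while ``not in \p'' refers to the impossibility of any polynomial-time algorithm that outputs a minimizer. With that bookkeeping fixed, the proof itself is a one-line combination of Theorem~\ref{lm:fdm_dec_hard} and Lemma~\ref{lm:beyondp}.
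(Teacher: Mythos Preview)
Your proposal is correct, but the route to \nphard differs from the paper's. You derive \nphard indirectly: \pfdmdec is \pspacehard (Theorem~\ref{lm:fdm_dec_hard}), and since $\np \subseteq \pspace$, every \np language reduces to \pfdmdec and hence to \pfdm. The paper instead observes that the deterministic-in/deterministic-out filter minimization problem---whose decision form is known to be \npcomplete from~\cite{o2017concise}---sits inside \pfdm as the special case where the input happens to be deterministic; \nphard then follows by direct containment of an \npcomplete problem. Your argument stays self-contained within the paper's own results and, along the way, actually yields the stronger fact that \pfdm is \pspacehard (which you note); the paper's argument is more elementary, avoids invoking the \pspace machinery for this particular theorem, but relies on an external citation. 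For the ``not in \p'' half, both you and the paper appeal to Lemma~\ref{lm:beyondp} in the same way.
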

\begin{proof}
The deterministic input to deterministic output filter minimization
problem, the decision problem form of which is \npcomplete~\cite{o2017concise},
is properly contained in \pfdm (one just happens to select an input that is 
deterministic). 
We have that \pfdm is \nphard, and 
combining with
Lemma~\ref{lm:beyondp}, we can conclude that \pfdm is
strictly \nphard.
\end{proof}

To summarize, Construction~\ref{const:prime} and Lemma~\ref{lem:size_is_big} show that the gap between the
size of the deterministic minimizer can be larger than polynomial of the input
size.  It indicates that constructing and storing the deterministic minimizer
in its entirety to determine its size would disqualify \pfdmdec from \pspace.
Of course, other cleverer means may exist, so whether \pfdmdec is \pspace (as a
consequence, \pfdmdec is \pspacecomplete) or not remains an open question.

\section{A comparison between automata minimization and filter minimization}

With the preceding hardness results for filter minimization problems
established, we now compare them with the hardness of automata minimization in
\fig~\ref{fig:overview}.
It is worthwhile to try distill intuition for a couple of reasons:
firstly, the automata hardness results were used in the arguments above, so
their connection might seem obvious at first blush. But the
notion of equivalence beween two automata is quite different from
that beween two filters,  as, importantly, are specific requirements on
interaction
vs. accepting languages.  Secondly, recall that the initial supposition that 
deterministic filter and deterministic automata minimization problems were
identical, was wrong.


In the first column of \fig~\ref{fig:overview}
($\textbf{Det}(\mathscr{F})\tom \textbf{Det}(\mathscr{F'})$): automata
minimization problem ($\mathscr{F}$ is a DFA) can be solved efficiently by
identifying Myhill--Nerode equivalence classes~\cite{mikolajczak1991algebraic},
while the decision version of filter minimization problem ($\mathscr{F}$ is
a filter) is \npcomplete.  The main reason for this hardness separation between
these two problems is the extra degree of freedom (DOF) for filter minimization.
Filters can choose to assign \emph{any} output for the strings that crash
\shorten{in the filter} (informally, we call this DOF~\MakeUppercase{\romannumeral 1}). To exploit this
degree of freedom
optimally, it is equivalent to searching for a minimum clique cover in the
compatibility graph of the input filter~\cite{zhang2020cover}, which makes the
problem computationally hard. 

For the other two columns of \fig~\ref{fig:overview}: As we consider
nondeterminism in the input or output object, the hardness separation
between automata minimization and filter minimization disappears. Informally
speaking, it appears that the hardness arising from DOF
\MakeUppercase{\romannumeral 1} is dominated by other sources of complexity. 
When nondeterminism appears in both input and output, i.e.,
$\textbf{NDet}(\mathscr{F})\tom
\textbf{NDet}(\mathscr{F'})$, the decision problems of both filter minimization
and automata minimization are \pspacecomplete\!\!\cite{jiang1993minimal}.  For
both, there could be multiple states simultaneously reached by the same string
(DOF \MakeUppercase{\romannumeral 2}) though it takes no more than polynomial
space to check the outputs of those states.  Though  both are \pspacecomplete,
the problems differ in the degrees of freedom they have---though, clearly, this
difference is not enough to manifest as a hardness gap.
On the one hand, nondeterministic filter minimization ($\mathscr{F}$ is a
non-deterministic filter) has DOF~\MakeUppercase{\romannumeral 1} while
nondeterminsitic automata 
minimization ($\mathscr{F}$ is an NFA) does not.  On the other,
non-deterministic filter minimization requires all outputs of all states reached
by the string be constrained, whereas non-deterministic automata minimization
can choose to accept the states or not, as long as at least one
is accepted~(DOF~\MakeUppercase{\romannumeral 3}), 

If we keep nondeterminism in the inputs but remove it from the outputs, 
the problems $\textbf{NDet}(\mathscr{F})\tomq \textbf{Det}(\mathscr{F'})$ do not
become any easier.  When outputs are restricted to be
deterministic, the size of the output can be substantially larger than that of
the input filter, (\MakeUppercase{\romannumeral 4}). If one were to think of
this as a search problem, a more restrictive type can drastically increase 
the search space size.  
It only ever takes polynomial space for $\textbf{Det}(\mathscr{F})\tom
\textbf{Det}(\mathscr{F'})$, 
but it is unclear whether this holds for $\textbf{NDet}(\mathscr{F})\tomq
\textbf{Det}(\mathscr{F'})$, and the increase in output size is unfavourable
(though inconclusive) evidence to the contrary.

\begin{figure}
	\begin{center}
		\includegraphics[width=1\columnwidth]{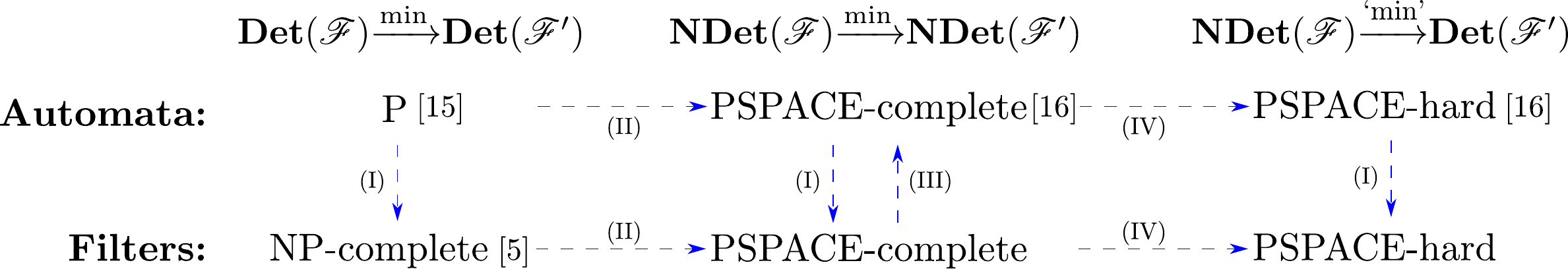}
	\end{center}
	\vspace{-0.2cm}
	\caption{
    \shortenXor{}{\small}
    A comparison between hardness results of decision versions of
automata minimization and filter minimization.%
\label{fig:overview}}
\shortvspace{-12pt}
\end{figure}

\section{Conclusion}
This paper shows the value of nondeterminism in combinatorial filter reduction,
analyzes the hardness of nondeterministic filter minimization problems, and
shows that the hardness separation between deterministic filter minimization
problems disappears in the nondeterministic cases. 

Future work might consider the hardness results for finding nondeterministic
minimizers for deterministic input filters, which is only known to be in \pspace.
Another direction is to examine complete, approximation, and heuristic
algorithms to solve nondeterministic filter minimization.

\bibliographystyle{IEEEtran}
\bibliography{mybibshort}

\begin{thebibliography}{10}
\providecommand{\url}[1]{#1}
\csname url@samestyle\endcsname
\providecommand{\newblock}{\relax}
\providecommand{\bibinfo}[2]{#2}
\providecommand{\BIBentrySTDinterwordspacing}{\spaceskip=0pt\relax}
\providecommand{\BIBentryALTinterwordstretchfactor}{4}
\providecommand{\BIBentryALTinterwordspacing}{\spaceskip=\fontdimen2\font plus
\BIBentryALTinterwordstretchfactor\fontdimen3\font minus
  \fontdimen4\font\relax}
\providecommand{\BIBforeignlanguage}[2]{{%
\expandafter\ifx\csname l@#1\endcsname\relax
\typeout{** WARNING: IEEEtran.bst: No hyphenation pattern has been}%
\typeout{** loaded for the language `#1'. Using the pattern for}%
\typeout{** the default language instead.}%
\else
\language=\csname l@#1\endcsname
\fi
#2}}
\providecommand{\BIBdecl}{\relax}
\BIBdecl

\bibitem{donald95}
B.~R. Donald, ``On information invariants in robotics,'' \emph{{Artificial
  Intelligence}}, vol.~72, pp. 217--304, 1995.

\bibitem{lavalle10sensing}
S.~M. LaValle, ``Sensing and filtering: A fresh perspective based on preimages
  and information spaces,'' \emph{Foundations and Trends in Robotics}, vol.~1,
  no.~4, pp. 253--372, 2010.

\bibitem{thrun02probablistic}
S.~Thrun, W.~Burgard, and D.~Fox, \emph{Probabilistic robotics.}\hskip 1em plus
  0.5em minus 0.4em\relax MIT Press, 2005.

\bibitem{tovar2014combinatorial}
B.~Tovar, F.~Cohen, L.~Bobadilla, J.~Czarnowski, and S.~M. Lavalle,
  ``Combinatorial filters: Sensor beams, obstacles, and possible paths,''
  \emph{ACM Transactions on Sensor Networks}, vol.~10, no.~3, pp. 1--32, 2014.

\bibitem{o2017concise}
J.~M. O'Kane and D.~A. Shell, ``Concise planning and filtering: hardness and
  algorithms,'' \emph{IEEE Transactions on Automation Science and Engineering},
  vol.~14, no.~4, pp. 1666--1681, 2017.

\bibitem{saberifar2017combinatorial}
F.~Z. Saberifar, A.~Mohades, M.~Razzazi, and J.~M. O'Kane, ``Combinatorial
  filter reduction: Special cases, approximation, and fixed-parameter
  tractability,'' \emph{Journal of Computer and System Sciences}, vol.~85, pp.
  74--92, 2017.

\bibitem{rahmani2020integer}
H.~Rahmani and J.~M. O'Kane, ``Integer linear programming formulations of the
  filter partitioning minimization problem,'' \emph{Journal of Combinatorial
  Optimization}, vol.~40, pp. 431--453, 2020.

\bibitem{zhang2020cover}
Y.~Zhang and D.~A. Shell, ``Cover combinatorial filters and their minimization
  problem,'' in \emph{Algorithmic Foundations of Robotics XIV}, S.~M. LaValle,
  M.~Lin, T.~Ojala, D.~Shell, and J.~Yu, Eds.\hskip 1em plus 0.5em minus
  0.4em\relax Cham: Springer International Publishing, 2021, pp. 90--106.

\bibitem{zhang2020AcceleratingCF}
Y.~Zhang, H.~Rahmani, D.~A. Shell, and J.~M. O'Kane, ``Accelerating
  combinatorial filter reduction through constraints,'' in \emph{Proceedings of
  IEEE International Conference on Robotics and Automation}, 2021.

\bibitem{setlabelrss}
F.~Z. Saberifar, S.~Ghasemlou, J.~M. O'Kane, and D.~A. Shell, ``Set-labelled
  filters and sensor transformations,'' in \emph{Robotics: Science and
  Systems}, 2016.

\bibitem{stockmeyer1973word}
L.~Stockmeyer and A.~Meyer, ``Word problems requiring exponential time:
  preliminary report,'' in \emph{Symposium on Theory of Computing}, 1973, pp.
  1--9.

\bibitem{meyer1972equivalence}
A.~R. Meyer and L.~J. Stockmeyer, ``The equivalence problem for regular
  expressions with squaring requires exponential space,'' in \emph{Symposium on
  Switching and Automata Theory}, 1972, pp. 125--129.

\bibitem{rampersad2012computational}
N.~Rampersad, J.~Shallit, and Z.~Xu, ``The computational complexity of
  universality problems for prefixes, suffixes, factors, and subwords of
  regular languages,'' \emph{Fundamenta Informaticae}, vol. 116, no. 1-4, pp.
  223--236, 2012.

\bibitem{meyer1971economy}
A.~R. Meyer and M.~J. Fischer, ``Economy of description by automata, grammars,
  and formal systems,'' in \emph{Symposium on Switching and Automata Theory},
  1971, pp. 188--191.

\bibitem{bach1996algorithmic}
E.~Bach, J.~O. Shallit, J.~Shallit, and S.~Jeffrey, \emph{Algorithmic number
  theory: Efficient algorithms}.\hskip 1em plus 0.5em minus 0.4em\relax MIT
  press, 1996, vol.~1.

\bibitem{jameson2003prime}
G.~J.~O. Jameson, \emph{The prime number theorem}.\hskip 1em plus 0.5em minus
  0.4em\relax Cambridge University Press, 2003.

\bibitem{mikolajczak1991algebraic}
B.~Mikolajczak, \emph{Algebraic and structural automata theory}, ser. Annals of
  Discrete Mathematics.\hskip 1em plus 0.5em minus 0.4em\relax Elsevier Science
  Inc., 1991.

\bibitem{jiang1993minimal}
T.~Jiang and B.~Ravikumar, ``{Minimal NFA problems are hard},'' \emph{SIAM
  Journal on Computing}, vol.~22, no.~6, pp. 1117--1141, 1993.

\end{thebibliography}

\end{document}